\newcommand{\cmark}{\ding{51}}%
\newcommand{\xmark}{\ding{55}}%
\newcommand{\tabincell}[2]{\begin{tabular}{@{}#1@{}}#2\end{tabular}}
\newtheorem{thm}{Theorem}
\def\x{\mathbf{x}}
\def\y{\mathbf{y}}
\def\Y{\mathcal{Y}}
\def\X{\mathcal{X}}
\def\S{\mathcal{S}}
\def\A{\mathbf{A}}
\def\B{\mathbf{B}}
\begin{document}
 
\title{Investigating Bi-Level Optimization for Learning and Vision from a Unified Perspective: \\A Survey and Beyond}
 
\author{Risheng~Liu,~\IEEEmembership{Member,~IEEE,}
        Jiaxin~Gao, 
        Jin~Zhang,
        Deyu~Meng,~\IEEEmembership{Member,~IEEE,}
        and~Zhouchen~Lin,~\IEEEmembership{Fellow,~IEEE}
\IEEEcompsocitemizethanks{\IEEEcompsocthanksitem  R. Liu and J. Gao are with the DUT-RU International School of Information Science \& Engineering, Dalian University of Technology, and also with the Key Laboratory for Ubiquitous Network and Service Software of Liaoning Province, Dalian 116024, China.
E-mail: rsliu@dlut.edu.cn, jiaxinn.gao@outlook.com. R. Liu is the corresponding author. \protect 
\IEEEcompsocthanksitem J. Zhang is with the Department of Mathematics, Southern University of Science and Technology, and National Center for Applied Mathematics Shenzhen, China, E-mail: zhangj9@sustech.edu.cn.\protect
\IEEEcompsocthanksitem D. Meng is with School of Mathematics and Statistics and Ministry of Education Key Lab of Intelligent Networks and Network Security, Xi’an Jiaotong University, Xi'an, Shaanxi, China. E-mail: dymeng@mail.xjtu.edu.cn.\protect
\IEEEcompsocthanksitem Z. Lin is with the Key Laboratory of Machine Perception (Ministry of Education), School of Electronics Engineering and Computer Science, Peking University, Beijing 100871, China, and also with the Cooperative Medianet Innovation Center, Shanghai Jiao Tong University, Shanghai 200240, China. E-mail: zlin@pku.edu.cn.\protect
}
\thanks{Manuscript received April 19, 2005; revised August 26, 2015.}}

%
%

\markboth{Journal of \LaTeX\ Class Files,~Vol.~14, No.~8, August~2015}%
{Shell \MakeLowercase{\textit{et al.}}: Bare Demo of IEEEtran.cls for Computer Society Journals}

\IEEEtitleabstractindextext{%
\begin{abstract}
Bi-Level Optimization (BLO) is originated from the area of economic game theory and then introduced into the optimization community. BLO is able to handle problems with a hierarchical structure, involving two levels of optimization tasks, where one task is nested inside the other. In machine learning and computer vision fields, despite the different
motivations and mechanisms, a lot of complex problems, such as hyper-parameter optimization, multi-task and meta learning, neural architecture search, adversarial learning and deep reinforcement learning, actually all contain a series of closely related subproblms. In this paper, we first uniformly express these complex learning and vision problems from the perspective of BLO. Then we construct a best-response-based single-level reformulation and 
establish a unified algorithmic framework to understand and formulate mainstream gradient-based BLO methodologies, covering aspects ranging from fundamental automatic differentiation schemes to various accelerations, simplifications, extensions and their convergence and complexity properties.
Last but not least, we discuss the potentials of our unified BLO framework for designing new algorithms and point out some promising directions for future research.
\end{abstract}

\begin{IEEEkeywords}
Bi-level optimization, Learning and vision applications, Value-function-based reformulation, Best-response mapping, Explicit and implicit gradients.
\end{IEEEkeywords}}

\maketitle

\IEEEdisplaynontitleabstractindextext

\IEEEpeerreviewmaketitle

\IEEEraisesectionheading{\section{Introduction}\label{sec:introduction}}

\IEEEPARstart{B}{i-Level} Optimization (BLO) is the hierarchical mathematical program where the feasible region of one optimization task is restricted by the solution set mapping of another optimization task (i.e., the second task is embedded within the first one)~\cite{dempe2020bilevel}. The outer optimization task is commonly referred to as the Upper-Level (UL) problem, and the inner optimization task is commonly referred to as the Lower-Level (LL) problem. BLOs involve two kinds of variables, referred to as the UL and LL variables, accordingly.

The origin of BLOs can be traced to the domain of game theory and is known as Stackelberg competition~\cite{von1952theory}. 
Subsequently, it has been investigated in view of many important applications in various fields of science and engineering, particularly in economics, management, chemistry, optimal control, and resource allocation problems~\cite{fortuny1981representation,dempe2015bilevel,sinha2017review,wogrin2020applications}. Especially,
in recent years, a great amount of modern applications in the fields of machine learning and computer vision (e.g., hyper-parameter optimization~\cite{domke2012generic,foo2008efficient,mackay2019self,okuno2018hyperparameter}, multi-task and meta  learning~\cite{franceschi2017bridge,franceschi2018bilevel,shaban2019truncated}, neural architecture search~\cite{liu2018darts,hu2020tf,lian2019towards}, generative adversarial learning~\cite{li2019learning,jiang2018learning,tian2020alphagan}, deep reinforcement learning~\cite{zhang2020bi,pfau2016connecting,yang2019provably} and image processing and analysis~\cite{ochs2015bilevel,chen2020flexible,ijcai2020-101}, just name a few) have arisen that fit the BLO framework.

In general, most of the earlier BLOs are highly complicated and computationally challenging to solve due to their nonconvexity and non-differentiability~\cite{kunapuli2008classification,dempe2016solution}. Despite their apparent simplicity, BLOs are nonconvex problems with an implicitly determined feasible region even if the UL and LL subproblems are convex~\cite{hansen1992new,zemkoho2016solving}. 
Indeed, it has been proved that even strictly checking the local optimality of the simplest BLO model (e.g., linear BLO) is still a NP-hard problem~\cite{vicente1994descent,calvete2020algorithms}. In addition, the existence of multiple optima for the LL subproblem can result in an inadequate formulation of BLOs, which could aggravate the difficulty of theoretical analysis~\cite{liu2020generic}. Despite the challenges, a lot of research topics consisting of methods and applications of BLOs have followed in this field, see~\cite{dempe2016solution,sharma2020optimistic}. Early studies focused on numerical methods, including extreme-point methods~\cite{calvete2012bilevel}, branch-and-bound methods~\cite{fortuny1981representation,lu2007extended}, descent methods~\cite{savard1994steepest,neto2011perturbed}, penalty function methods~\cite{anandalingam1990solution,wan2014estimation}, trust-region methods~\cite{el2018penalty,dempe2001bundle}, and so on. The most often used procedure is to replace the LL subproblem with its Karush–Kuhn–Tucker (KKT) conditions, and if assumptions are made (such as smoothness, convexity, among others) the BLOs can be transformed into single-level optimization problems~\cite{allende2013solving,sinha2019using,sinha2018bilevel}. 
However, due to the high complexity of bi-level models, solving BLOs for large-scale and high-dimensional practical applications in learning and vision fields is still challenging~\cite{yimer2020proximal}.

The classical idea (e.g., the first-order approach in economics literature) to reformulate BLO is to replace the LL subproblem Eq.~\eqref{eq:blo-ll} by its KKT conditions and minimize over the original variables $\x$ and $\y$ as well as the multipliers. The resulting problem is a so-called Mathematical Program with Equilibrium Constraints (MPEC) \cite{raghunathan2003mathematical,zemkoho2020theoretical}.  Unfortunately, MPECs are still a challenging class of problems because of the presence of the complementarity
constraint~\cite{aussel2018generalized}. Solution methods for MPECs can be categorized into two types of approaches. The first one, namely, the nonlinear programming approach rewrites the complementarity constraint
into nonlinear inequalities, and then
allows to leverage powerful numerical nonlinear programming solvers. The other one, namely, the combinatorial approach tackles the combinatorial nature of the disjunctive constraint.
Despite the difficulties, MPEC has been studied intensively in the last three decades~\cite{kunapuli2008bilevel}. Recently,
some progress on the MPEC approach in dealing with BLOs have been witnessed by the community of mathematical programming, in the context of selecting optimal
hyper-parameters in regression and classification problems. There are two issues caused by the multipliers in the MPEC approach. \textcolor[rgb]{0.00,0.00,0.00} {First, in theory, if there exist more than one multipliers for the LL subproblem, MPEC will not be equivalent to the original BLO (in the local optimality scenario)~\cite{dempe2012bilevel}.} Second, the introduced auxiliary multiplier variables can limit the numerical efficiency when solving the BLO problem.

In recent years, a variety of machine learning and computer vision tasks, including but not limited to, hyper-parameter optimization~\cite{franceschi2017forward,okuno2020bilevel,shaban2019truncated,liu2020boml}), multi-task and meta  learning~\cite{ji2020multi,li2017meta,chen2017learning,antoniou2018train}, neural architecture search~\cite{liu2018darts,lian2019towards,dong2020automatic,xu2019pc}, adversarial learning~\cite{jiang2018learning,li2019learning,liu2020gl,pfau2016connecting}, and deep reinforcement learning~\cite{zhang2020bi,wang2020global,pfau2016connecting,hong2020two}, have been investigated in application scenarios. 
Despite the different motivations and mechanisms, all these problems contain a series of closely related subproblems and have a natural hierarchical optimization structure. However, although received increasing attentions in both academic and industrial communities, there still lack a unified perspective to understand and formulate these different categories of hierarchical learning and vision problems.  

We notice that most previous surveys on BLOs (e.g., ~\cite{dempe2020bilevel,anandalingam1992hierarchical,wen1991linear,ukkusuri2013bi,lachhwani2018bi,chinchuluun2009multilevel,gould2016differentiating}) are purely from the viewpoint of mathematical programming and mainly focus on the formulations, properties, optimality conditions and these classical solution algorithms, such as evolutionary methods~\cite{sinha2017review}. In contrast, the aim of this paper is to utilize BLO to express a variety of complex learning and vision problems, which explicitly or implicitly contain closely related subproblems. Furthermore, we present a unified perspective to comprehensively survey different categories of gradient-based BLO methodologies in specific learning and vision applications. In particular, we first provide a literature review on various complex learning and vision problems, including hyper-parameter optimization, multi-task and meta learning, neural architecture search, adversarial learning, deep reinforcement learning and so on. We demonstrate that all these tasks can be modeled as a general BLO formulation. Following this perspective, we then establish a best-response-based single-level reformulation to express these existing BLO models. By further introducing a unified algorithmic framework on the single-level reformulation, we can uniformly understand and formulate these existing gradient-based BLOs and analyze their accelerations, simplifications, extensions, and convergence and complexity proprieties. Finally, we demonstrate the potentials of our framework for designing new algorithms and point out some promising research directions for BLO in learning and vision fields.

Compared with existing surveys on BLOs, our major contributions can be summarized as follows:
\begin{enumerate}
	\item To the best of our knowledge, this is the first survey paper to focus on uniformly understanding and (re)formulating different categories of complex machine learning and computer vision tasks and their solution methods (especially in the context of deep learning) from the perspective of BLO.
	\item By introducing a best-response-based single-level reformulation and constructing a best-response-based algorithmic framework, we obtain a general and flexible platform that can successfully unify different existing gradient-based BLO methodologies and uniformly analyze these accelerations, simplifications, and extensions in literature.
	\item The convergence behaviors of gradient-based BLOs are comprehensively analyzed. Especially, we establish a general convergence analysis template to investigate the iteration behaviors of a series of gradient-based BLOs from a unified perspective. The time and space complexity of various mainstream schemes is also systematically analyzed. 
	\item Our gradient-based BLO platform not only comprehensively covers mainstream gradient-based BLO methods, but also has potentials for designing new BLO algorithms to deal with more challenging tasks. We also point out some promising directions for future research.
\end{enumerate}

We summarize our mathematical notations in Table~\ref{tab:notation-2}. The remainder of this paper is organized as follows. We first introduce some necessary fundamentals of BLOs in Section~\ref{sec:fundamental}. Then, Section~\ref{sec:alltasks} provides a comprehensive survey of various learning and vision applications that all can be modeled as BLOs. In Section~\ref{sec:framework}, we establish an algorithmic framework in a unified manner for existing gradient-based BLO schemes. Within this framework, we further understand and formulate two different categories of BLOs (i.e., explicit and implicit gradients for best-response) in Section~\ref{sec:egbr} and Section~\ref{sec:igbr}, respectively. We also discuss the so-called lower-level singleton issue of BLOs in Section~\ref{sec:beyond}. The convergence and complexity properties of these gradient-based BLOs are discussed in Section~\ref{sec:convergence}. Section~\ref{sec:pessimistic} puts forward potentials of our framework for designing new algorithms to deal with more challenging pessimistic BLOs. Finally, Section~\ref{sec:conclusions} points out some promising directions for future research.


\begin{table*}[ht]
	\centering
	\small
	\caption{Summary of mathematical notations.}\label{tab:notation-2}
	
	\begin{tabular}{|l l|l l|}
		\hline
		Notation & Description & Notation &  Description  \\
		\hline	\hline
		$\mathcal{D}_{\mathtt{tr}}$/$\mathcal{D}_{\mathtt{val}}$ & Training/Validation data & $o^{ij}/\mathbf{x}_{o}^{ij}$ & Operations/Operation weights\\
		$\pi/r$ & Policy/Reward &  $s/a$ &State/Action\\
		$Q^{\pi}$  & Q-function under $\pi$ & $Q^{\pi}(s,a)$ &State-action value-function  \\
		$G/D$  & Generator/Discriminator & 
		$ \mathbf{u}/ \mathbf{v}$ & Real-world image/Random noise \\
		$\rho_t$ & Aggregation parameters  &  $\x\in\mathbb{R}^m$/$\y\in\mathbb{R}^n$ & UL/LL variable \\
	   	$F/f$ & UL/LL objective &	$\S(\x)$ & Solution set of the LL subproblem (given $\x$)\\
	    $\y^*(\x)$ & BR mapping & $\widetilde{\S}(\x)$ & Solution set of the ISB subproblem (given $\x$)  \\
		$\mathtt{dist}(\cdot)$ & Point-to-set distance & $\circ$ & Compound operation \\
		$\Psi(\x)$ & $\Psi=\Psi_{T}\circ\cdots\circ\Psi_1\circ\Psi_0$ & $\Psi_{\bm{\theta}}(\x)$ & Hyper-network with parameters 
		$\bm{\theta}$ \\ 
		$\Psi_t$ &  Dynamical system at $t$-th stage&$\frac{\partial \varphi(\x)}{\partial \x}$ & Gradient of $\x$  \\
		$\frac{\partial F(\x,\y^*(\x))}{\partial\x}$ & Direct gradient of $\x$ & $\mathbf{G}(\x)$ & Indirect gradient of $\x$\\
		$\frac{\partial\y^*(\x)}{\partial\x}$ & BR Jacobian & $\frac{\partial \varphi(\x^k)}{\partial \x^k}$ & Numerical BR Jacobian w.r.t. $\x^k$\\ 
		$\varphi(\x)$ & UL value-function &$\psi(\x)$ &  LL value-function\\
		$\mathbf{d}(\mathbf{y}_{t-1};\x)$ & Optimistic aggregated gradient & $\widetilde{\mathbf{d}}(\mathbf{y}_{t-1};\x)$ & Pessimistic aggregated gradient  \\ 
		$\left(\frac{\partial^2f}{\partial\y\partial\y'}\right)^{-1}$ & Inverse Hessian matrix  &
		$\left(\frac{\partial^2 f}{\partial\y \partial\y '} 
		\right)^{-1}\frac{\partial F}{\partial \y }$ & Inverse Hessian-vector product \\	
		$t$/$k$ & Index of the LL/UL iteration & $T/K$ & Maximum LL/UL iteration number \\
		$(\cdot)_t$  &  $t$-th LL iteration   & $(\cdot)^k$ & $k$-th UL iteration  \\ 
		 $(\cdot)^{'}$ & Transposition operation& $(\x^*,\y^*)$ &The optimal UL and LL solutions\\ 
		$\mathbf{Z}_T $ & $ \sum_{t=1}^T\left(\prod_{i=t+1}^T\mathbf{A}_i\right)\mathbf{B}_t$ & $\mathbf{Z}_{T-M}$ & $  
		\sum_{t=T-M+1}^{T}\left(\prod_{i=t+1}^{T}\mathbf{A}_i\right)\mathbf{B}_{t} $ \\
		$\mathbf{P}(\y_{t-1},\bm{\omega})$ & Layer-wise transformation & $\sum\limits_{j=0}^\infty 
		\left(\mathbf{I}-\frac{\partial^2 f}{\partial\y\partial\y'}\right)^j$ & Neumann series  \\ 
		$\mathbf{A}_t$ & $\frac{\partial\Psi_{t}(\y_{t-1};\x)}{\partial \y_{t-1}}$
		& 	$ \mathbf{B}_t$ & $   \frac{\partial\Psi_{t}(\y_{t-1};\x)}{\partial\mathbf{x}}$ \\
		$\inf_{\y\in \S(\x)}F(\x,\y)$ & Optimistic objective & $\psi_{\mu}(\x)$ & parameterized LL value-function (with $\mu$)  \\
		$\sup_{\y\in \S(\x)}F(\x,\y)$ & Pessimistic objective & $\varphi_{\mu,\theta,\tau}\left(\x\right)$ & parameterized UL value-function (with $\mu$, $\theta$ and $\tau$) \\		
		\hline
	\end{tabular}
\end{table*}

\section{Fundamentals of Bi-Level Optimization}\label{sec:fundamental}


Bi-Level Optimization (BLO) contains two levels of optimization tasks, where one is nested
within the other as a constraint. The inner (or nested) and outer optimization tasks are often respectively referred to as the Lower-Level (LL) and Upper-Level (UL) subproblems~\cite{dempe2020bilevel}. Correspondingly, there are two types of variables, namely, the LL ($\y\in\mathbb{R}^n$) and UL ($\x\in\mathbb{R}^m$) variables. 
Specifically, the LL subproblem can be formulated as the following parametric optimization task
\begin{equation}
\min_{\y \in \Y} f(\x,\y), \ (\mbox{parameterized by $\x$}),\label{eq:blo-ll}
\end{equation}
where we consider a continuous function $f:\mathbb{R}^m\times \mathbb{R}^n\to\mathbb{R}$ as the LL objective and $\Y\subseteq\mathbb{R}^n$ is a nonempty set. 
By denoting the value-function as
$\psi(\x):=\min_{\y \in \Y}f(\x,\y)$,
we can define the solution set of the LL subproblem with given $\x$ as
$\S(\x):=\left\{\y\in\Y~|~f(\x,\y)\leq\psi(\x)\right\}.$
Then the standard BLO problem can be formally expressed as
\begin{equation}
\min\limits_{\x\in\X}F(\x,\y), \ s.t. \ \y\in\S(\x),\label{eq:blo}
\end{equation}
where the UL objective $F:\mathbb{R}^m\times\mathbb{R}^n\to\mathbb{R}$ is also a continuous function and the feasible set $\X\subseteq\mathbb{R}^m$. In fact, a feasible solution to BLO in Eq.~\eqref{eq:blo} should be a vector of UL and LL variables, such that it satisfies all the constraints
in Eq.~\eqref{eq:blo}, and the LL variables are optimal to
the LL subproblem in Eq.~\eqref{eq:blo-ll} for the given UL variables
as parameters. In Fig.~\ref{fig:blo}, we provide a simple visual illustration for BLOs stated in Eq.~\eqref{eq:blo}. 
\begin{figure}[ht]
	\centering  
	\subfigure[]{
		\label{Fig.sub.1}
		\includegraphics[width=0.284\textwidth]{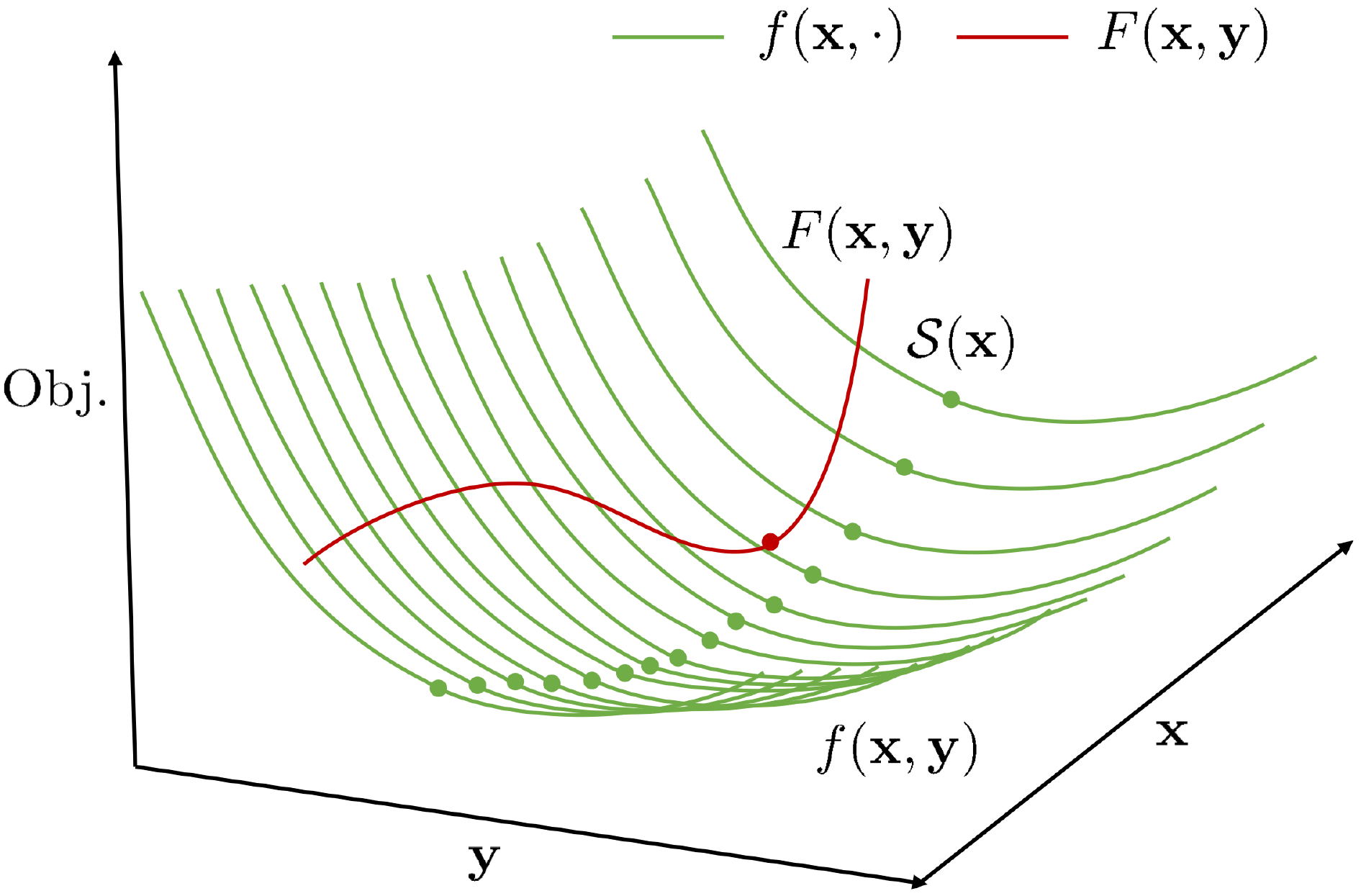}}
	\subfigure[]{
		\label{Fig.sub.2}
		\includegraphics[width=0.183\textwidth]{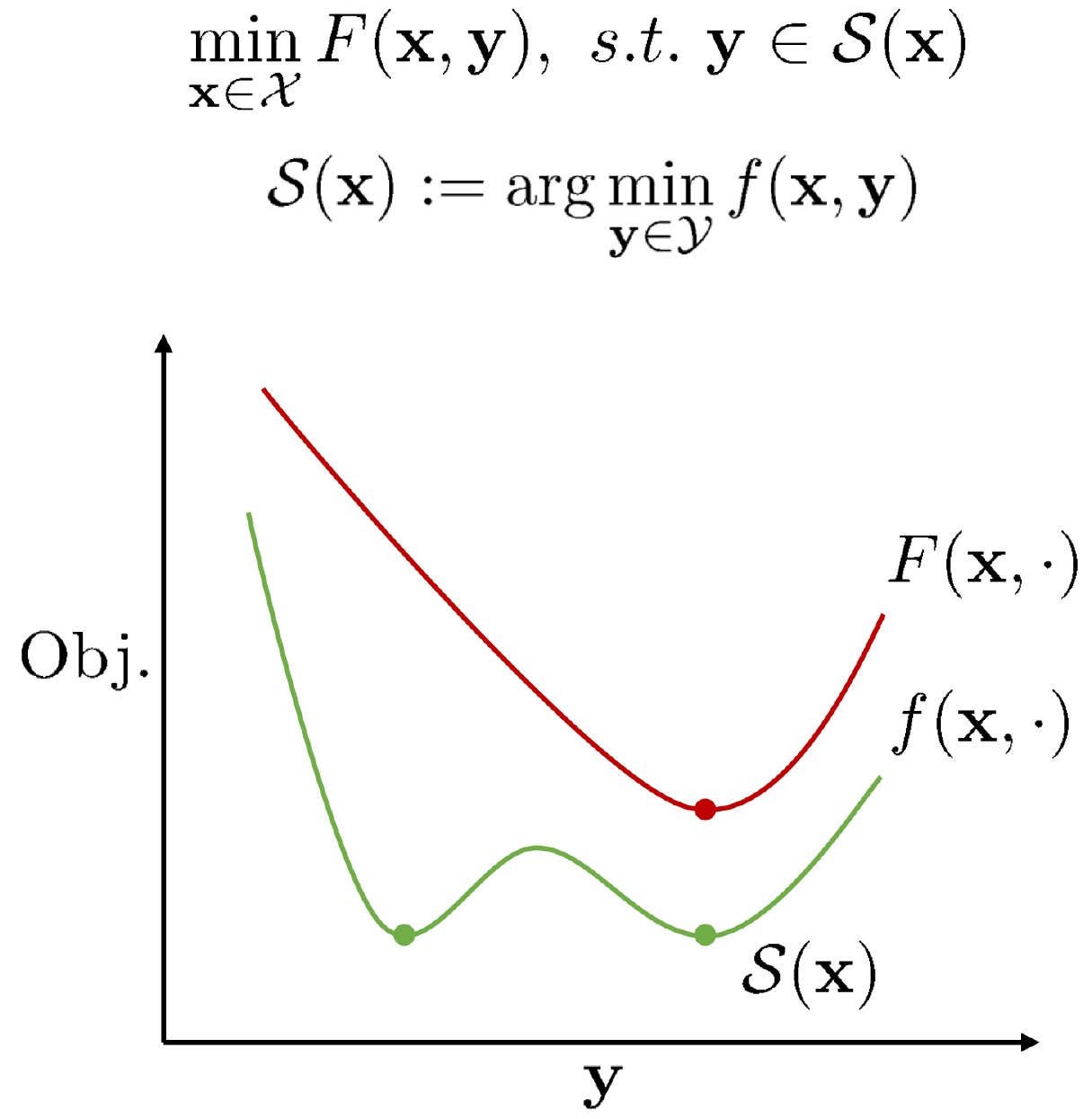}}
	\caption{Illustrating the problem of BLO. (a) first shows a standard BLO problem with the situation of multiple solutions of $f$. Green curves denote LL objectives denoted by $f$, and their corresponding minimizers given by $\S(\x)$ were shown as green dots. The red curve represents the UL objective $F$, whose minimizer is shown as the red dot. (b) further illustrates that, in general, not all points (green dots) in $\S(\x)$ could minimize the UL objective denoted by $F$.}
	\label{fig:blo}
\end{figure}

The above BLO problem has a natural interpretation as a non-cooperative game between two players (i.e., Stackelberg game~\cite{dempe2020bilevel}). Correspondingly, we may also call the UL and LL subproblems as the leader and follower, respectively. Then the ``leader'' chooses the decision $\x$ first, and afterwards the ``follower'' observes $\x$ so as to respond with a decision $\y$. Therefore, the follower may depend on the leader's decision. Likewise, the leader has to satisfy a constraint that depends on the follower’s decision.

It is worthwhile noting that the LL subproblem may have multiple solutions for every (or some) fixed value of the UL decision making variable $\x$. 
When the solution of the LL subproblem is not unique, it is difficult for the leader to predict which point in $\S(\x)$ the follower will choose (see Fig.~\ref{fig:blo} (b) for example).

\section{\textcolor[rgb]{0.00,0.00,0.00} {Understanding and Modeling Practical Problems by BLOs}}\label{sec:alltasks}

In this section, we demonstrate that even with different motivations and mechanisms, a variety of modern complex learning and vision tasks (e.g, hyper-parameter optimization, multi-task and meta learning, neural architecture search, adversarial learning, deep reinforcement learning and so on) actually share close relationships from the BLO perspective. Moreover, we provide a uniform BLO expression to (re)formulate all these problems. Table~\ref{tab:All_app} provides a summary of learning and vision applications, which can be understood and modeled by BLO. 
\begin{table*}[htb]
	\small
	\caption{Summary of related learning and vision applications that can be (re)formulated as BLOs. The abbreviations are listed as follows: Hyper-parameter Optimization (HO), Meta-Feature Learning (MFL), Meta-Initialization Learning (MIL), Neural Architecture Search (NAS), Adversarial Learning (AL), and Deep Reinforcement Learning (DRL). }\label{tab:All_app}
	\centering
	\renewcommand\arraystretch{1.2}
	\begin{tabular}{|c||c|c|}
		\hline
		
		\multirow{1}{*}{Task} & Important work & Other work \\
		\hline
		\hline
		HO&  \makecell*[l]{\hspace{1em}\cite{franceschi2017forward} (ICML, 2017), \\
			\hspace{1em}\cite{shaban2019truncated} (AISTATS, 2019),\\
			 \hspace{1em}\cite{liu2020generic} (ICML, 2020)  }
			  & \makecell*[l]{\hspace{1em}\cite{deledalle2014stein} (SIAM, 2014),  \cite{jiang2020hyper} (EURO, 2020), \cite{maclaurin2015gradient} (ICML, 2015),  \cite{lorraine2020optimizing} (AISTATS, 2020),\\
		\hspace{1em}\cite{franceschi2018far} (ICML, 2018), \cite{pedregosa2016hyperparameter} (ICML, 2016), \cite{Likhosherstov2020UFOBLOUF} (arXiv, 2019), \cite{mackay2019self} (ICLR, 2019),\\
		\hspace{1em}\cite{liu2020boml} (ICML, 2021),
		\cite{amos2017optnet} (ICML, 2017), 
		\cite{bae2020delta} (NIPS, 2020), 
		 \cite{franceschi2018bilevel} (ICML, 2018)} \\
		\hline
		\hline
		MFL & \makecell*[l]{\hspace{1em}\cite{franceschi2018bilevel} (ICML, 2018),\\
			\hspace{1em}\cite{chen2017learning} (ICML, 2017)} & \makecell*[l]{\hspace{1em}\cite{franceschi2017bridge} (arXiv, 2017), \cite{qiao2018few} (CVPR, 2018), \cite{gidaris2018dynamic} (CVPR, 2018), \cite{mishra2017simple} (ICLR, 2018), \\
		\hspace{1em}\cite{li2016learning} (ICLR, 2017), \cite{rusu2018meta} (ICLR, 2018), \cite{zintgraf2019fast} (ICML, 2019)} \\
		\hline
		\hline 
		MIL &  \makecell*[l]{\hspace{1em}\cite{rajeswaran2019meta} (NIPS, 2019),\\
			 \hspace{1em}\cite{antoniou2018train} (ICLR, 2019),\\
			\hspace{1em}\cite{bertinetto2018meta} (ICLR, 2019),\\
			\hspace{1em}\cite{park2019meta} (NIPS, 2019)}
			   & \makecell*[l]{\hspace{1em}\cite{ravi2016optimization} (ICLR, 2017),  \cite{li2017meta} (ICML, 2017), \cite{finn2017model} (ICML, 2017), \cite{nichol2018first} (arXiv, 2018),  \\
		\hspace{1em}\cite{behl2019alpha} (arXiv, 2019), \cite{guo2020learning} (CVPR, 2020),\cite{wu2020enhanced} (AAAI, 2020), \cite{nichol2018reptile} (arXiv, 2018), \\
		\hspace{1em}\cite{zhou2019efficient} (NIPS, 2019), \cite{song2019maml} (ICLR, 2020), \cite{pmlr-v97-denevi19a} (ICML, 2019),\cite{lee2018gradient} (ICML, 2018),\\
		\hspace{1em}\cite{soh2020meta} (CVPR, 2020), \cite{tian2020differentiable} (AAAI, 2020),  \cite{hsu2020meta} (ICASSP, 2020)} \\
		\hline
		\hline
		NAS & \makecell*[l]{\hspace{1em}\cite{liu2018darts} (ICLR, 2019),\\
			\hspace{1em}\cite{wong2018transfer} (NIPS, 2018), \\
			\hspace{1em}\cite{dong2020automatic} (TGRS, 2020),\\
			\hspace{1em}\cite{jiang2020sp} (CVPR, 2020),\\
			\hspace{1em}\cite{xu2019pc} (ICLR, 2019)}  &\makecell*[l]{\hspace{1em}\cite{xie2018snas} (ICLR, 2019),  \cite{pham2018efficient} (CVPR, 2018),  \cite{lian2019towards} (ICLR, 2019), \cite{chen2019progressive} (ICCV, 2019), \\
		\hspace{1em}\cite{noy2020asap} (AISTATS, 2020),  \cite{elsken2020meta} (CVPR, 2020), \cite{yao2020efficient} (AAAI, 2020), \cite{hu2020dsnas} (CVPR, 2020), \\
		\hspace{1em}\cite{liu2019auto} (CVPR, 2019), \cite{chen2019detnas} (NIPS, 2019), \cite{xu2019auto} (ICCV, 2019), \cite{chang2019data} (NIPS, 2019),  \\ 
		\hspace{1em}\cite{yu2020c2fnas} (CVPR, 2020), \cite{zhou2019auto} (arXiv, 2019), \cite{li2020autost} (SIGKDD, 2020), \cite{guo2020hit} (CVPR, 2020),\\
		\hspace{1em}\cite{he2020milenas} (CVPR, 2020), \cite{cheng2020differentiable} (arXiv, 2020)}\\ 
		\hline
		\hline 
		AL & \makecell*[l]{\hspace{1em}\cite{metz2016unrolled} (arXiv, 2016),\\ 
			\hspace{1em}\cite{pfau2016connecting} (arXiv, 2016)}
			 &\makecell*[l]{\hspace{1em}\cite{yin2020meta} (AAAI, 2020), \cite{gao2020adversarialnas} (CVPR, 2020),   \cite{jiang2018learning} (arXiv, 2018),   \cite{li2019learning} (PR, 2019),\\
		\hspace{1em}\cite{jin2020local} (ICML, 2020), \cite{liu2020gl} (CVPR, 2020), \cite{tian2020alphagan} (CVPR, 2020), \cite{hamm2018k} (ICML, 2018)}\\	
		\hline 
		\hline 
		DRL & \makecell*[l]{\hspace{1em}\cite{zhang2020bi} (AAAI, 2020), \\ 
			\hspace{1em}\cite{pfau2016connecting} (arXiv, 2016),\\
		\hspace{1em}\cite{wang2020global} (ICML, 2020)} 
		& \makecell*[l]{\hspace{1em}\cite{chen2019research} (CIRED, 2019), \cite{hong2020two} (arXiv, 2020), \cite{tschiatschek2019learner} (NIPS, 2019), \cite{zhang2020generative} (ICML, 2020),\\
			\hspace{1em}\cite{yang2020hierarchical} (AAMAS, 2020),  \cite{gao2019graphnas} (arXiv, 2019), \cite{ye2019deep} (TSG, 2019), \cite{ho2016generative} (NeurIPS, 2016), \\
			\hspace{1em}\cite{li2017infogail} (NeurIPS, 2017), \cite{torabi2018generative} (arXiv, 2018), \cite{wang2020global} (ICML, 2020), \cite{liu2019taming} (ICML, 2019) } \\
		\hline
		\hline
		Others & \makecell*[l]{\hspace{1em}\cite{kunisch2013bilevel} (SIAM, 2013), \\
			\hspace{1em}\cite{ochs2015bilevel} (SSVM, 2015), \\
			\hspace{1em}\cite{liu2020bilevel} (TIP, 2020),\\
			\hspace{1em}\cite{li2020bilevel} (TNNLS, 2020),\\
			 \hspace{1em}\cite{zhou2016bilevel} (TIP, 2016) } &  \makecell*[l]{\hspace{1em}\cite{fernando2016learning} (ICML, 2016),\cite{pfau2019spectral} (ICLR, 2019),   \cite{ijcai2020-101} (IJCAI, 2020), 
			 \cite{d2019bilevel} (arXiv, 2019),\\
			 \hspace{1em}\cite{stadie2020learning} (UAI, 2020),  \cite{learning2021deformable} (arXiv, 2021), \cite{pham2020bilevel} (arXiv, 2020), \cite{liu2020investigating} (TIP, 2020), \\
			 \hspace{1em}\cite{liu2019bilevel} (arXiv, 2019), \cite{liu2020retinex} (arXiv, 2020), \cite{stouraitis2020online} (T-RO, 2020), \cite{mounsaveng2020learning} (WACV, 2020), \\	
			 \hspace{1em}\cite{borsos2020semi} (arXiv, 2020), \cite{borsos2020coresets} (NIPS, 2020), \cite{chada2020consistency} (arXiv, 2020), \cite{yousefian2020differentiable} (arXiv, 2020),\\
			 \hspace{1em}\cite{liu2020mnemonics} (CVPR, 2020),   \cite{litany2018soseleto} (ICLR, 2018)}  \\
		\hline
	\end{tabular}
\end{table*}

\subsection{Hyper-parameter Optimization}
Hyper-parameter Optimization (HO) refers to the problem of identifying the optimal set of hyper-parameters that can’t be learned using the training data alone. 
Early in learning and vision areas, designing regularized models or support vector machines are generally the recommended approaches of selecting hyper-parameters~\cite{bennett2008bilevel}. 
Based on the representation of the hierarchical structure, 
these approaches are first expressed as a BLO problem and then transformed into the single-level optimization problem by replacing the LL subproblem with its optimality condition~\cite{couellan2015bi}. 
Due to the high computational cost, especially in high-dimensional hyper-parameter space, these original methods even could not guarantee a local optimal solution~\cite{bergstra2013making}. 

In recent years, gradient-based HO methods with deep neural networks have received extensive attention, which are generally divided into two categories: iterative differentiation (i.e., \cite{domke2012generic,baydin2014automatic,ochs2015bilevel,maclaurin2015gradient,okuno2018hyperparameter,franceschi2017bridge,franceschi2018bilevel,Likhosherstov2020UFOBLOUF,shaban2019truncated,franceschi2018far}) and implicit differentiation (i.e., \cite{chapelle2002choosing,seeger2008cross,kunisch2013bilevel,calatroni2017bilevel,mackay2019self,foo2008efficient,pedregosa2016hyperparameter,lorraine2018stochastic,lorraine2020optimizing}), depending on how the gradient (w.r.t. hyper-parameters) can be computed. 
The former approximates the best-response function by performing several steps of gradient descent on the loss function, while the latter derives the hyper-gradients through the implicit function theory. 
One particular type of gradient-based HO is the data hyper-cleaning problem~\cite{franceschi2017forward,shaban2019truncated}, which generally trains a linear classifier with a cross-entropy function (w.r.t. parameters $\y$) and learns to optimize the hyper-parameters $\x$ with a $\ell_2$ regularization function.

\begin{figure}[htb]
	\centering \begin{tabular}{c@{\extracolsep{0.2em}}c}
		\includegraphics[width=0.44\textwidth]{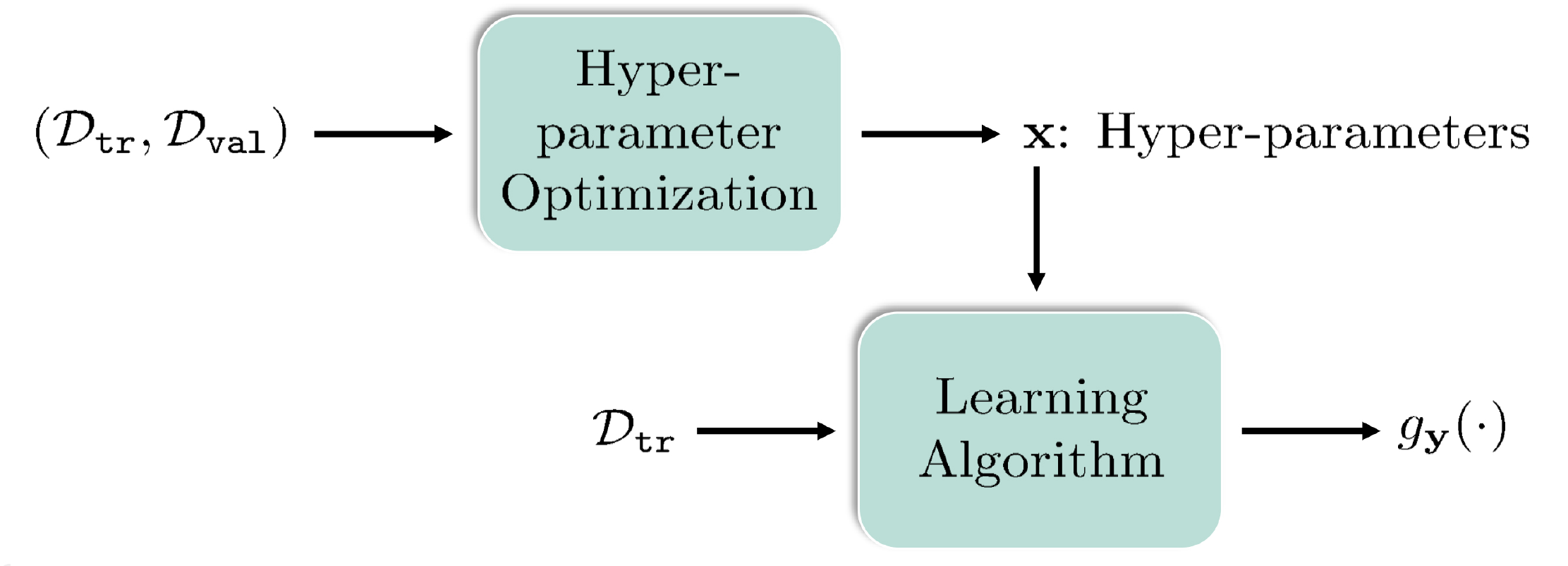}
	\end{tabular}
	\caption{Schematic diagram of HO. The UL subproblem involves optimization of hyper-parameters $\x$ based on $\left(\mathcal{D}_{\mathtt{tr}}, \mathcal{D}_{\mathtt{val}}\right)$, while the LL subproblem involves optimization of weight parameters $\y$, aiming to find the learning algorithm $g_{\mathbf{y}}(\cdot)$ based on $\mathcal{D}_{\mathtt{tr}}$.  }\label{fig:fighpo}
\end{figure}

\textcolor[rgb]{0.00,0.00,0.00} {Indeed, HO can be understood as the most straightforward application of BLO in learning and vision fields~\cite{bennett2008bilevel}}. 
Specifically, the UL objective $F(\x,\y;\mathcal{D}_{\mathtt{val}} )$ aims to minimize the validation set loss with respect to the hyper-parameters (e.g. weight decay), and the LL objective $f(\x,\y;\mathcal{D}_{\mathtt{tr}} )$ needs to output a learning algorithm by minimizing the training loss with respect to the model parameters (e.g. weights and biases). 
As illustrated in Fig.~\ref{fig:fighpo}, the full dataset $\mathcal{D}$ is divided into the training and validation datasets (i.e., $\mathcal{D}_{\mathtt{tr}}\cup \mathcal{D}_{\mathtt{val}}$) and we instantiate how to model the HO task from the perspective of BLO. 
Inspired by this nested optimization, most HO applications can be characterized by the bi-level structure and formulated as the BLO problems. The UL subproblem involves the optimization of hyper-parameters $\x$ and the LL subproblem (w.r.t. weight parameters $\y$) aims to find the learning algorithm $g_{\mathbf{y}}(\cdot)$ by minimizing the training loss.

\subsection{Multi-task and Meta Learning}

The goal of meta learning (a.k.a., learning to learn) is to design models that can learn new skills or adapt to new environments rapidly with a few training examples (see Fig.~\ref{fig:ML} for a schematic diagram). As a variant of meta learning, multi-task learning just intends to jointly perform all the given tasks~\cite{thrun1998learning,ruder2017overview}. One of the most well-known instances of meta learning is few-shot classification (i.e., $N$-way $M$-shot). Each task is a $N$-way classification designed to learn the meta-parameter with $M$ training samples selected from each of the class. 
Specially, the full meta training data set $\mathcal{D}=\{\mathcal{D}^j\}$ ($j=1,\cdots,N$) can be segmented into $\mathcal{D}^j=\mathcal{D}_{\mathtt{tr}}^j\cup\mathcal{D}_{\mathtt{val}}^j$, where $\mathcal{D}^j$ is linked to the $j$-th task. 
\begin{figure}[htb]
	\centering \begin{tabular}{c@{\extracolsep{0.2em}}c}
		\includegraphics[width=0.44\textwidth]{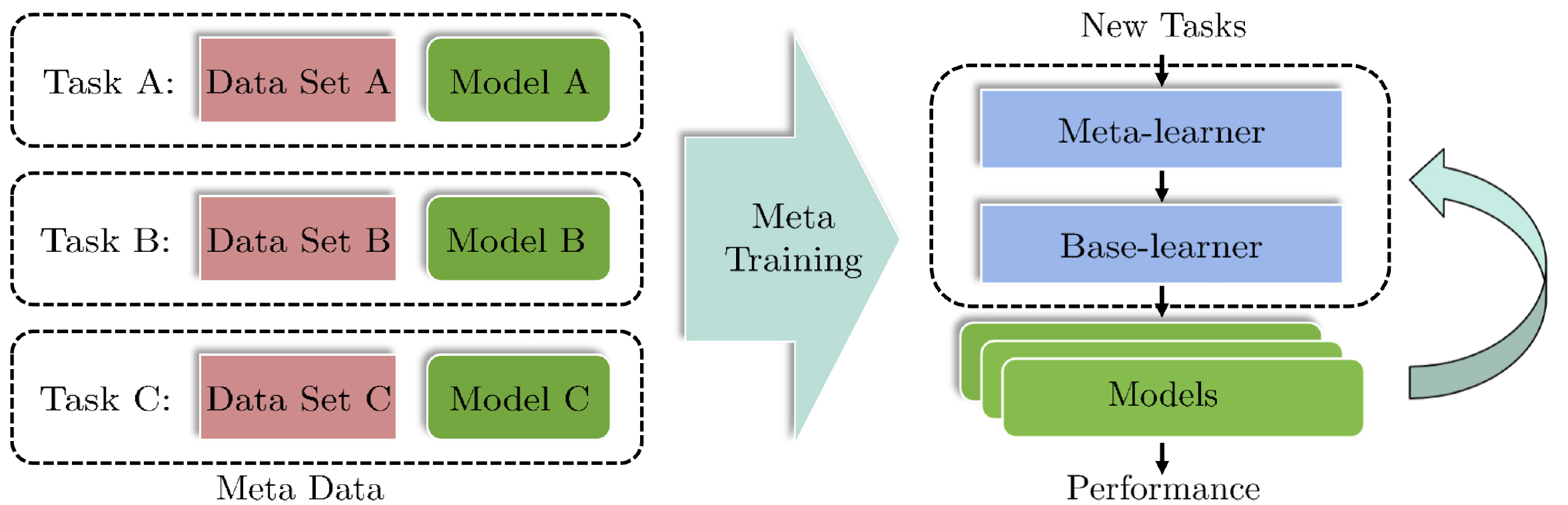}
	\end{tabular}
	\caption{Illustrating the training process of meta learning. The whole process is visualized to learn new tasks quickly by drawing upon related tasks on corresponding data sets. It can be decomposed into two parts: the ``base-learner'' trained for operating a given task and the ``meta-learner'' trained to learn how to optimize the base-learner.
	}\label{fig:ML} 
\end{figure}

According to the dependency between the meta-parameters and the network parameters, current meta learning based methods can be roughly categorized as two groups, i.e., meta-feature learning and meta-initialization learning, as can be seen in Fig.~\ref{Fig.ML}. 
\textcolor[rgb]{0.00,0.00,0.00} {Specifically, meta-initialization learning aims to investigate the meta information of multiple tasks by the network initialization, which can also be understood as the promotion of fine-tuning~\cite{rajeswaran2019meta,park2019meta}. From the BLO perspective, we actually formulate the network parameters and their initialization (based on multi-task information) by the LL and UL subproblems, respectively. In contrast, meta-feature learning methods first separate the network architecture as the meta feature extraction part and the task-specific part. Then they formulate a hierarchical learning process~\cite{franceschi2017bridge,franceschi2018bilevel,chen2017learning}. So in such tasks, we use the UL and LL subproblems to model the meta-feature part and the task-specific part, respectively.}

\subsubsection{Meta-feature Learning }

Meta-Feature Learning (MFL) aims to learn a sharing meta feature representation of all tasks. 
Recently, series of meta learning based approaches show that multi-task with hard parameter sharing and meta-feature representation are essentially similar~\cite{zhao2018data,alesiani2020towards}.
The optimization of meta-learner with respect to meta-parameters based on the UL subproblem is similar to HO~\cite{franceschi2017bridge,franceschi2018bilevel,franceschi2018far}. 
The cross-entropy function $\ell(\x,\y^j;\mathcal{D}_{\mathtt{tr}}^j)$ is actually considered as the task-specific loss for the $j$-th task on the meta training data set to define the LL objective. 

As illustrated in the subfigure (a) of Fig.~\ref{Fig.ML}, following the bi-level framework, the network architecture in this category can be subdivided into two groups. The first is the cross-task intermediate representation layer parameterized by $\x$ (illustrated by the blue block), outputting the meta features. 
The second is the logistic regression layer parameterized by $\y^j$ (illustrated by the green block), as the ground classifier for the $j$-th task. 
As can be seen, the feature layers are shared across all episodes, while the softmax regression layer is episode (task) specific.  
We can also observe that the process of network forward propagation corresponds to the process of passing from the feature extraction part to the softmax part.
\subsubsection{Meta-initialization Learning}

Meta-Initialization Learning (MIL) aims to learn a meta initialization for all tasks.
MAML~\cite{finn2017model}, known for its simplicity, estimates initialization parameters with the cross-entropy and mean-squared error for supervised classification and regression tasks purely by the gradient-based search.  
Except for initial parameters, recent approaches have focused on learning other meta variables, such as updating strategies (e.g., descent direction and learning rate~\cite{andrychowicz2016learning,ravi2016optimization,behl2019alpha}) and an extra preconditioning matrix (i.e.,~\cite{flennerhag2019meta,rusu2018meta,lee2018gradient}). 
Moreover, implicit gradient methods have a rapid development in the context of few-shot meta learning. 
There exist a large variety of algorithms replacing the gradient process of the optimization of base-learner through calculation of implicit meta gradient~\cite{rajeswaran2019meta,zhou2019efficient,balcan2019provable,bertinetto2018meta}. 
Due to the large amount of computation required to calculate the Hessian vector product in the training process, various Hessian-free algorithms have been proposed to alleviate the costly computation 
of second-order derivatives, including but not limited to~\cite{nichol2018reptile,ji2020multi,song2019maml,antoniou2018train,li2017meta,chen2017learning}. 
In particular, various first-order approximation BLO algorithms have been proposed to avoid the time-consuming calculation of second-order derivatives in~\cite{nichol2018first}. 
For instance, 
a modularized optimization library was proposed in~\cite{liu2020boml} to unify several meta learning algorithms into a common BLO framework\footnote{The code for this library is available at \url{https://github.com/dut-media-lab/BOML}.}. 

As can be shown in subfigure (b) of Fig.~\ref{Fig.ML}, $\x$ denoted by blue blocks corresponds to network initialization parameters, and $\y$ denoted by green blocks corresponds to model parameters and is treated as the updated variable satisfying the condition $\y_0^j=\x$.
Compared to MFL, there is no deeply intertwined and entangled relationship between two variables $(\x,\y^j)$, and $\x$ is only explicitly related to $\y$ in the initial state. 
As a bi-level coupled nested loop strategy, the LL subproblem based on base-learner is trained for operating a given task, and the UL subproblem based on meta-learner aims to learn how to optimize the base-learner.  
Among the well-known approaches in this direction, most recent approaches (i.e.,  \cite{lee2017meta,nichol2018first}) have claimed that the LL objective is denoted by the task-specific loss on the training data set, i.e., $f(\x,\{\y^j\})=\ell(\x,\y^j;\mathcal{D}_{\mathtt{tr}}^j)$.
By utilizing cross-entropy function, the UL objective is given by $F(\x,\{\y^j\})=\sum _{j}\ell(\x,\y^j;\mathcal{D}_{\mathtt{val}}^j).$ 

\begin{figure}[htb]
	\centering  
	\subfigure[MFL ]{
		\label{Fig.sub.ML-L}
		\includegraphics[width=0.192\textwidth]{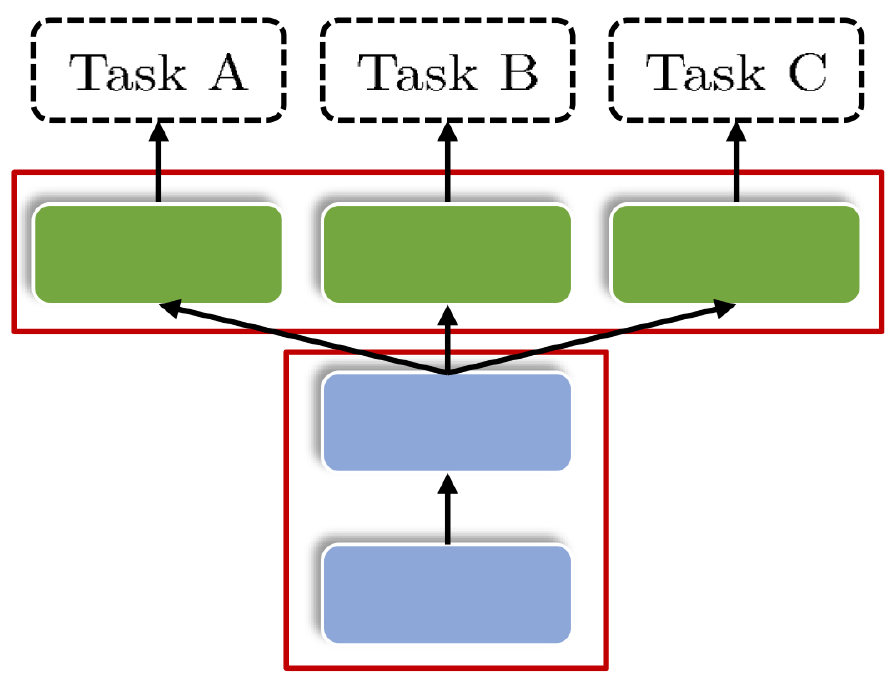}}
	\subfigure[MIL]{
		\label{Fig.sub.ML-R}
		\includegraphics[width=0.279\textwidth]{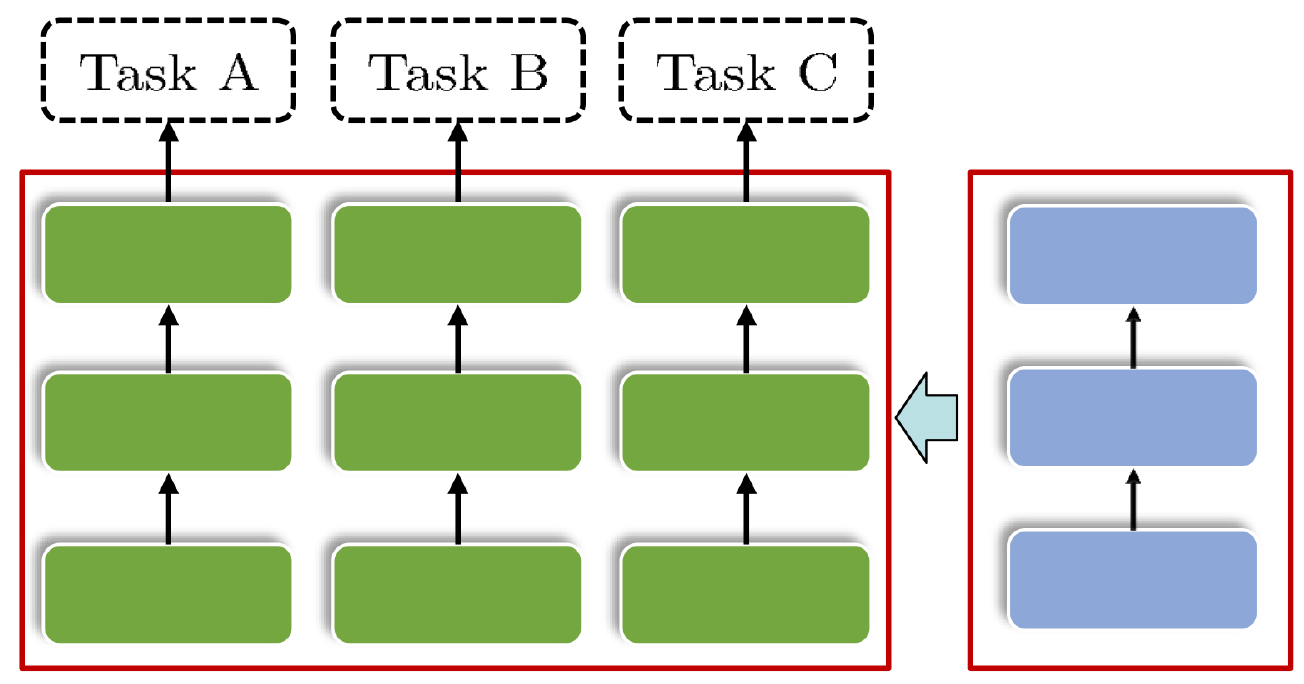}}
	\caption{Illustration of two architectures that are generally applied to multi-task and meta learning: MFL and MIL. Both of them can be separated into two parts: meta-parameters denoted by $\x$ (blue blocks) and parameters denoted by $\y^j$ (green blocks). (a) shows meta-parameters for features shared across tasks and parameters of the logistic regression layer. (b) shows meta (initial) parameters shared across tasks and parameters of the task specific layer. }
	\label{Fig.ML}
\end{figure}

Both MFL and MIL are essential solution strategies of one optimizer based on another optimizer, thus conforming to the construction of the BLO scheme. 
As a task-specific loss associated with the $j$-th task, the LL objective can be defined as $\mathbf{y}^j\in\arg\min_{\y^j\in\mathcal{Y}} f\left(\x,\y^j;\mathcal{D}_{\mathtt{tr}}^j\right)$, $j=1,\cdots,N$. 
Also, based on $\{\mathcal{D}_{\mathtt{val}}^j\}$, the UL objective can be given by $\min_{\mathbf{x}\in\mathcal{X}}F\left(\x,\{\y^j\};\{\mathcal{D}_{\mathtt{val}}^j\}\right)$.

To summarize, the UL meta-learner performs gradient descent operations and updates the meta-parameter with feedback from base-learners to extract generalized meta knowledge. Subsequently, the better meta knowledge is fed into the base-learner (i.e., the LL subproblem) as part of its model for optimizing $\y$, thereby forming an optimization cycle.

\subsection{Neural Architecture Search}
Neural Architecture Search (NAS) seeks to automate the process of choosing the optimal neural network architecture~\cite{elsken2018neural}. 
Recently, there has aroused a great deal of interest in gradient-based differentiable NAS methods~\cite{liu2018darts,casale2019probabilistic,mendoza2016towards}.  
Specifically, these gradient-based differentiable NAS methods mainly contain three main concepts: search space, search strategy and performance estimation strategy. 
As shown in Fig.~\ref{fig:NAS}, by designing an architecture search space, they generally use a certain search strategy to find the optimal network architecture. 
Such a process can be regarded as the system of optimizing the operation and connection of each node. 
\begin{figure}[htb]
	\centering \begin{tabular}{c@{\extracolsep{0.2em}}c}
		\includegraphics[width=0.46\textwidth]{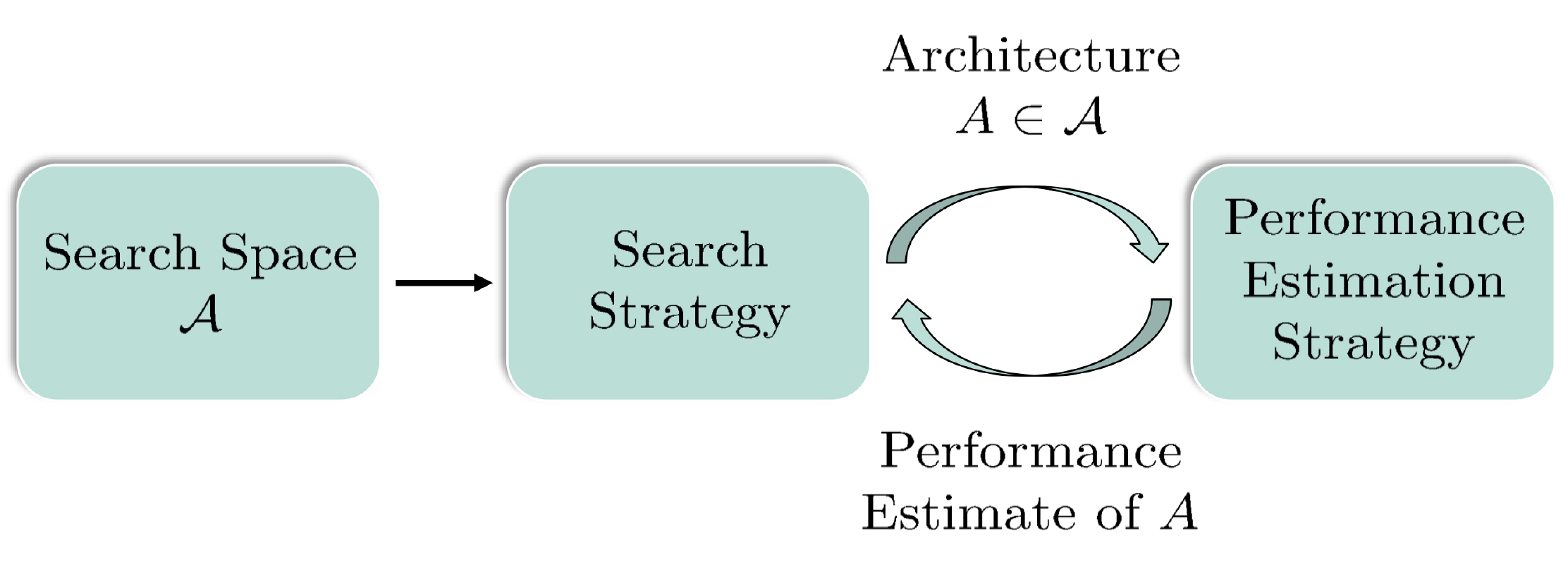}
	\end{tabular}
	\caption{Schematic diagram of NAS. Derived from a predefined search space $\mathcal{A}$, NAS first selects an architecture $A$ to transport into the performance estimation strategy, then returns the estimated performance of $A$ to the search strategy.}\label{fig:NAS}
\end{figure}

DARTS~\cite{liu2018darts}, the most well-known instance, relaxed the search space to be continuous and conducted searching for architectures in a differentiable way to simultaneously optimize the architectures and weights. 
Actually, each operation corresponds to a coefficient in DARTS.  By denoting $\mathbf{x}=\{\mathbf{x}^{ij}\}$ as the architecture parameters and $\mathbf{x}^{ij}$ as the form of connection between two nodes, the expression formula of mixed operations $\bar{o}^{ij}(\cdot)$ based on the softmax function can be written as
\begin{equation*}
\bar{o}^{ij}(\cdot)=\sum\limits_{o\in\mathcal{O}}\frac{\exp(\mathbf{x}_{o}^{ij})}
{\sum\limits_{o'\in\mathcal{O}}\exp(\mathbf{x}_{o'}^{ij})}o(\cdot),
\end{equation*}
where $o$ and $o'$ are operations and $\mathcal{O}$ is the set of all candidate operations.
Then, $o^{ij}=\arg\max_{o\in\mathcal{O}}\mathbf{x}_{o}^{ij}$ is further evaluated and performed in order to obtain the optimal architecture. 
However, due to the sharp deterioration in performance caused by the large number of skip connections, 
a great deal of improved approaches have emerged, such as ENAS~\cite{pham2018efficient}, PC-DARTS~\cite{xu2019pc}, P-DARTS~\cite{chen2019progressive}, just to name a few.

Currently, a series of gradient-based differentiable NAS methods combined with meta learning have been proposed, see~\cite{wang2020m,lian2019towards,kim2018auto,elsken2020meta}.  
Based on the bi-level coupling mechanism, these gradient-based differentiable NAS methods have achieved promising results in the numerous visual and learning applications, such as image classification~\cite{dong2020automatic}, semantic segmentation~\cite{yu2020c2fnas,zhu2019v,liu2019auto}, object detection~\cite{chen2019detnas,xu2019auto,guo2020hit,jiang2020sp,li2020autost}, medical image analysis~\cite{zhu2019v,yu2020c2fnas}, video classification~\cite{fernando2016learning}, recommendation system~\cite{cheng2020differentiable}, graph network~\cite{zhou2019auto,gao2019graphnas} and representation learning~\cite{gao2019graphnas}, etc.  

Given the proper search space, it is helpful for these gradient-based differentiable NAS methods to derive the optimal architecture for different vision and learning tasks. 
From the BLO's point of view, the UL objective w.r.t. the architecture weights (e.g. block/cell) can be parameterized by $\mathbf{x}$. And the LL objective w.r.t. the model weights can be parameterized by $\y$. 
Therefore, the full searching process can virtually be formulated as a BLO paradigm, where the UL objective is defined by $F(\x,\y;\mathcal{D}_{\mathtt{val}} )$ based on the validation data set $\mathcal{D}_{\mathtt{val}}$, and the LL objective is given by $f(\x,\y;\mathcal{D}_{\mathtt{tr}})$ based on the training data set $\mathcal{D}_{\mathtt{tr}}$.

\subsection{Adversarial Learning}

Adversarial Learning (AL) is currently deemed as one of the most important learning tasks. It has been applied in a large variety of application areas, i.e., image generation~\cite{liu2020gl,jiang2018learning,gao2020adversarialnas}, adversarial attacks~\cite{pang2020boosting} and face verification~\cite{li2019learning}. 
For example, the work proposed in~\cite{liu2020gl} introduced an adaptive BLO model for image generation, which guided the generator to reasonably modify the parameters in a complementary and promoting way. 
Moreover, a new adversarial training strategy has been proposed by learning a parametric optimizer with neural networks to study the adversarial attack~\cite{jiang2018learning}. 
As the current influential model, Generative Adversarial Network (GAN) can be deemed as deep generative models~\cite{goodfellow2014generative}. 
Recently, targeting at finding pure Nash equilibrium of generator and discriminator, the author proposed to exploit a fully differentiable search framework by formalizing as solving a bi-level mini-max optimization problem~\cite{tian2020alphagan}. 
\begin{figure}[htb]
	\centering \begin{tabular}{c@{\extracolsep{0.2em}}c}
		\includegraphics[width=0.465\textwidth]{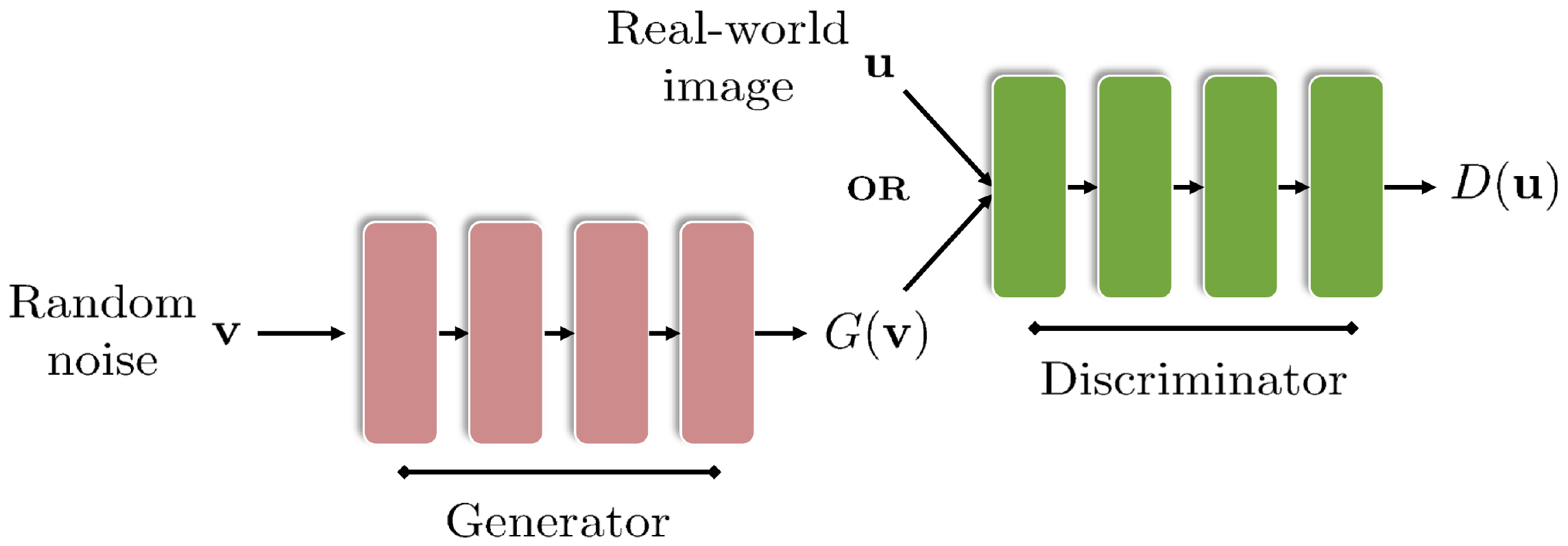}
	\end{tabular}
	\caption{ Illustrating the architecture of GAN. The generator $G$ is represented as a deterministic feed forward neural network (red blocks), through which a fixed random noise $\mathbf{v}$ is passed to output $G(\mathbf{v})$. The discriminator $D$ is another neural network (green blocks) which maps the sampled real-world image $\mathbf{u}\sim p_{data}$ and $G(\mathbf{v})$ to a binary classification probability. }\label{fig:GAN}
\end{figure}

Most of the AL approaches can formulate the unsupervised learning problem as a bi-level game between two opponents: a generator which samples from a distribution, and a discriminator which classifies the samples as real or false, as shown in Fig.~\ref{fig:GAN}. 
The goal of GAN is to minimize the duality gap denoted by $\mathcal{V}(D,G)$:
\begin{equation*} \label{eq:eqgan}
\begin{split}
\min\limits_{G}\max\limits_{D}\mathcal{V}(D,G)&=\mathbb{E}_{\mathbf{u}\sim p_{data}(\mathbf{u})}\log D(\mathbf{u})\\
&+\mathbb{E}_{\mathbf{v}\sim \mathcal{N}_{(0,1)}}\log (1-D(G(\mathbf{v}))),
\end{split}
\end{equation*}
where the fixed random noise source $\mathbf{v}$ obtained from $\mathbf{v}\sim\mathcal{N}_{(0,1)}$ is input into the generator $G$, which, together with the sampled real-world image $\mathbf{u}\sim p_{data}$, is then authenticated by the discriminator $D$. Notice that $\mathbb{E}$ denotes the expectation which implies that the average value of some functions under a probability distribution.    

Indeed, AL problems generally correspond to the mini-max BLO problems, where the UL discriminator denoted by $F$ targets on learning a robust classifier, and the LL generator denoted by $f$ tries to generate the adversarial samples. Specifically, the UL and LL objectives can be respectively formulated as 
\begin{equation*} \label{eq:eqgan-F}
\begin{split}
F(\mathbf{x},\mathbf{y})&=-\mathbb{E}_{\mathbf{u}\sim p_{data}(\mathbf{u})}\log D(\mathbf{u})\\
&-\mathbb{E}_{\mathbf{v}\sim \mathcal{N}_{(0,1)}}\log (1-D(G(\mathbf{v}))),
\end{split}
\end{equation*}
\begin{equation*}
f(\mathbf{x},\mathbf{y})=-\mathbb{E}_{\mathbf{v}\sim \mathcal{N}_{(0,1)}}\log (D(G(\mathbf{v}))),
\label{eq:eqgan-f}
\end{equation*}
where $G$ and $D$ are parameterized with variables $\y$ and $\x$, respectively. In other words, the UL subproblem aims to reduce the duality gap $\mathcal{V}(D,G)$ and the LL subproblem interactively optimizes the discriminator parameters denoted by $\x$ to obtain the optimal solution.

\subsection{Deep Reinforcement Learning}

Deep Reinforcement Learning (DRL) aims to make optimal decisions by interacting with the environment and learning from the experiences.
\textcolor[rgb]{0.00,0.00,0.00} {Indeed, a variety of DRL tasks, including 
Single-Agent Reinforcement Learning (SARL)~\cite{pfau2016connecting,yang2019provably,hong2020two}, Multi-Agent Reinforcement Learning (MARL)~\cite{chen2019research,yang2020hierarchical,zhang2020bi,wang2019bi}, Meta Reinforcement Learning (MRL)~\cite{wang2020global,liu2019taming,xu2018meta,clavera2018model}, and Imitation Learning (IL)~\cite{ho2016generative,torabi2018generative,li2017infogail}, which all can be modeled and tackled by BLO techniques. }

As for SARL problems, Actor-Critic (AC) type methods have been widely studied and viewed as a bi-level or two-time-scale optimization problems~\cite{hong2020two,yang2019provably},  as illustrated in Fig.~\ref{fig:AC}.
Indeed, AC type DRL methods often aim to simultaneously learn a state-action value-function $Q^{\pi}$ that predicts to expect the discounted cumulative reward and a policy which is optimal for that value function: 
\begin{equation*}
	\resizebox{0.95\hsize}{!}{$Q^{\pi}(s,a)=\mathbb{E}_{s_{i+j}\sim\mathcal{P},r_{i+j}\sim\mathcal{R},a_{i+j}\sim\pi}
		\left(\sum\limits_{k=0}^{\infty}\gamma^jr_{i+j}|s_i=s,a_i=a\right)$},\label{eq:eqac}
\end{equation*}
where $\mathcal{P}$ and $\mathcal{R}$ denote dynamics of the environment and reward function, $s$ and $a$ are the state and action, $i$ and $j$ represent the i-th and j-th steps, and $\mathbb{E}$ is the expectation which implies that the average value of some function under a probability distribution.  The policy maximizes the expected discounted cumulative reward for that state-action value-function, i.e.,
$
\pi^*=\arg\max_{\pi}\mathbb{E}_{s_0\sim p_0, a_0\sim \pi}\left(Q^{\pi}(s_0,a_0)\right),\label{eq:eqac1}
$
where $s_0$, $a_0$ and $p_0$ correspond to the initial state, initial action and the initial state distribution, respectively. Under the BLO paradigm, the actor and critic correspond to the UL and LL variables, respectively.
Let $\x$ denote the parameters of the state-action value-function and $\y$ denote the parameters of the policy $\pi$. The UL and LL objectives respectively take the form
\begin{equation*} \label{eq:eqac-F}
	\begin{aligned}
		F(\x,\y)  & = \mathbb{E}{s_{i},a_{i}\sim\pi}(\mathtt{div}(\mathbb{E}_{s_{i+1},a_{i+1},r_{i+1}}  \\
		& \left(r_{i+1}+\gamma Q(s_{i+1},a_{i+1})\right) \parallel Q(s_{i},a_{i}))),
	\end{aligned}
\end{equation*}
$$
	f(\x,\y) =-\mathbb{E}_{s_{0}\sim p_{0},a_{0}\sim\pi} Q^{\pi}(s_{0},a_{0}),
	\label{eq:eqac-f}
$$
where $\mathtt{div}(\cdot||\cdot)$ represents any divergence.

\textcolor[rgb]{0.00,0.00,0.00} {MARL studies how multiple agents can collectively learn, collaborate, and interact with each other in an environment. In the classical MARL system, agents are treated equally and the goal is to solve the Markov game to an arbitrary Nash equilibrium when multiple equilibria exist, thus lacking a solution for selection. To address this issue, the work in~\cite{zhang2020bi} formulates MARL as the  multi-state model-free Stackelberg equilibrium learning problem. Thus, under Markov games, they construct a BLO formulation to find Stackelberg equilibrium to address the MARL task. Similarly, a multi-agent bi-level cooperative reinforcement learning algorithm was proposed in~\cite{chen2019research} to solve the stochastic decision-making problem.}

\textcolor[rgb]{0.00,0.00,0.00} {
In recent years, MRL approaches (a.k.a., meta learning on reinforcement learning tasks), which
aim to learn a policy that adapts fast to new tasks and/or environments, have achieved remarkable success~\cite{wang2016learning,xu2018meta}. For example,
the work in~\cite{clavera2018model} learns a policy that can quickly adapt to other related models only with one policy gradient step. By adding control variables into gradient estimation, the work in~\cite{liu2019taming} can obtain low variance estimates for policy gradients. While the work in~\cite{wang2020global} characterizes the optimality gap of the stationary points attained by MAML for both reinforcement learning and supervised learning. Since all these works are based on the meta-initialization platform, it is also nature to formulate these meta reinforcement learning methods from the perspective of BLOs.
}


\textcolor[rgb]{0.00,0.00,0.00} {
Generally, IL techniques are very useful when it is easier for an expert to demonstrate the desired behavior rather than to specify a reward function which would generate the same behavior or to directly learn the policy in DRL tasks~\cite{arora2020provable}. In recent years, by connecting imitation learning with generative adversarial learning, a series of Generative Adversarial Imitation Learning (GAIL) techniques~\cite{ho2016generative,torabi2018generative,li2017infogail} have been investigated to imitate an expert in a model-free DRL scenario. Since GAIL type methods have a natural connection to the mechanism of GANs, we can definitely formulate these models using BLOs. 
}


\begin{figure}[htb]
	\centering \begin{tabular}{c@{\extracolsep{0.2em}}c}
		\includegraphics[width=0.474\textwidth]{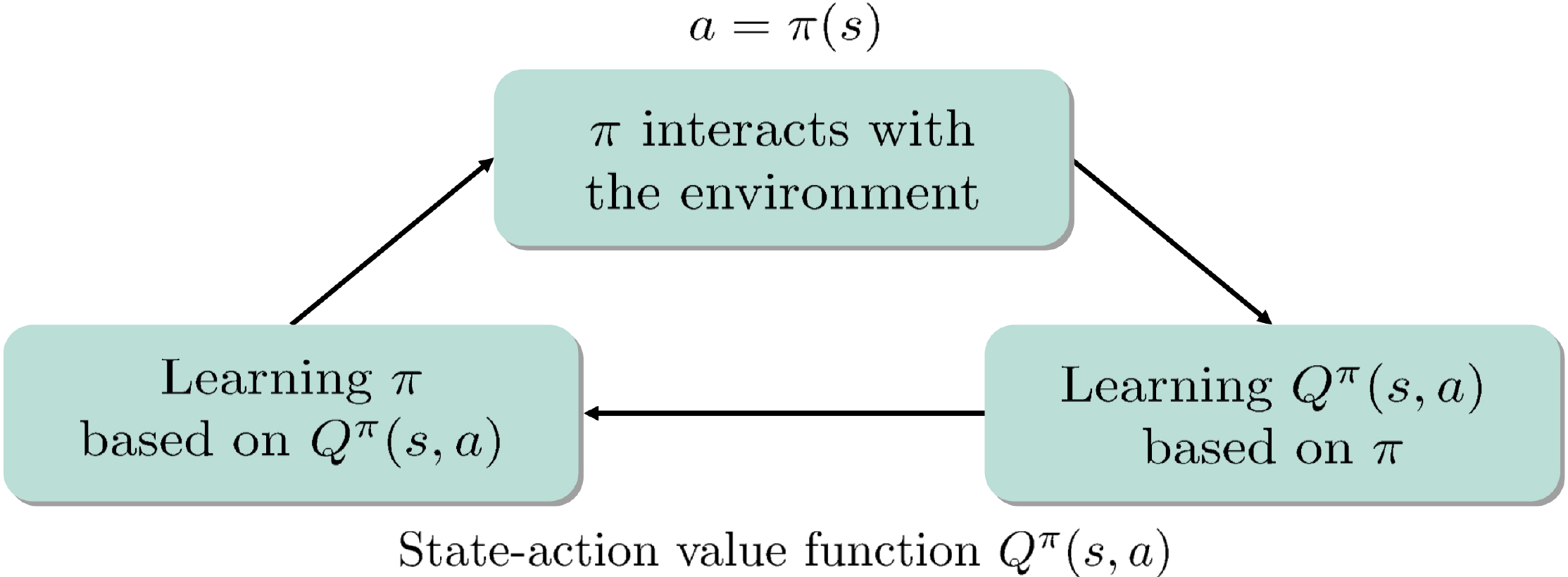}
	\end{tabular}
	\caption{ Illustrating the schematic diagram of AC learning. First the actor $\pi$ interacts with the environment to learn the state-action value-function $Q^{\pi}(s,a)$, and then the actor $\pi$ is again obtained based on $Q^{\pi}(s,a)$.}\label{fig:AC}
\end{figure}


\subsection{Other Related Applications}

The rapid development of deep learning has claimed its domination in the area of image processing and analysis. 
In addition to the above mentioned tasks, there exist a significant amount of other related learning and vision tasks that can be re(formulated) as BLO problems, such as image enhancement~\cite{kunisch2013bilevel,chen2020flexible,zhou2016bilevel,Pan2019Learning,d2019bilevel}, image registration~\cite{ijcai2020-101}, image-to-image translation~\cite{ma2019novel}, image recognition~\cite{xu2018bilevel}, image compression~\cite{zha2012hierarchical} and other related works \cite{stadie2020learning,borsos2020coresets,pfau2019spectral}. 
For example, the earlier work presented in~\cite{kunisch2013bilevel} considered the problem of parameter learning for image denoising models and incorporated $p$-norm–based analysis priors. 
Under a BLO formulation, the LL subproblem was given by a variational model which consisted of the data fidelity and regularization term, and the UL subproblem was expressed by the loss function. 
Furthermore, the work proposed in~\cite{zhou2016bilevel} formulated the discriminant dictionary learning method for image recognition tasks as a BLO. 
From this point of view, the UL subproblem can directly minimize the classification error, while the LL subproblem can use the sparsity term and the Laplacian term to characterize the intrinsic data structure. 

By addressing a unified BLO problem, the LL subproblem is usually expressed as fundamental models that conform to the laws or principles of physics, while the UL subproblem usually considers the further constraints on variables~\cite{ochs2015bilevel,fernando2016learning}.

\section{Gradient-based BLOs}\label{sec:framework}
\textcolor[rgb]{0.00,0.00,0.00} {In past years, gradient-based techniques have became the most popular BLO solution strategies in learning and vision fields. In fact, one of the first gradient-based BLO methodology is~\cite{vicente1994descent}. Currently, a variety of explicit gradient-based methods have been investigated to solve BLOs~\cite{maclaurin2015gradient,franceschi2018far,liu2021generic}. Specifically, the works in~\cite{franceschi2018bilevel,franceschi2017forward} first calculate gradient flow of the LL objective and then perform either reverse or forward gradient computations for the UL subproblem. Similar ideas have also been considered in~\cite{jenni2018deep,ochs2015bilevel,Likhosherstov2020UFOBLOUF}, but with different specific implementations. On the other hand, there also exist some implicit gradient based methods~\cite{lorraine2020optimizing,rajeswaran2019meta,grazzi2020iteration} to use the implicit function theorem to obtain the gradient.} In this section, we first review three categories of mainstream BLO formulations, which have been considered in various application scenarios. We then demonstrate how to uniformly reformulate these different BLOs from a single-level optimization perspective and investigate the intrinsic structures of existing gradient-based BLO algorithms within a unified algorithmic platform. 



\subsection{Different Formulations of BLO}

It is worthwhile to notice that the original BLO model given in Eq.~\eqref{eq:blo} is not clear in case of the multiple LL optimal solutions for some of the selections of the UL decision maker~\cite{dempe2020bilevel}. Therefore, it is necessary to define, which solution out of the multiple LL solutions in $\S(\x)$ should be considered. Here we actually consider three categories of viewpoints, i.e., singleton, optimistic and pessimistic BLOs.

The most straightforward idea in existing learning and vision literature is to assume that $\S(\x)$ is a singleton. Formally, we call the BLO model is with the Lower-Level Singleton (LLS) condition if $\forall \x\in\X$, the solution set of the LL subproblem (i.e., $\S(\x)$) is a singleton.
In this case, we can simplify the original model as
\begin{equation}
\min\limits_{\x\in\X}F(\x,\y), \ s.t. \ \y=\arg\min\limits_{\y\in\Y}f(\x,\y). \label{eq:blo-lls}
\end{equation}  
Such singleton version of BLOs is well-defined and could cover a variety of learning and vision tasks (e.g., \cite{domke2012generic,maclaurin2015gradient,pedregosa2016hyperparameter,mackay2019self}, just name a few). Thus, in recent years, dozens of methods have been developed to address this nested optimization task in different application scenarios (see the following sections for more details). 

Furthermore, the situation becomes more intricate if the LL subproblem is not uniquely solvable for each $\x\in\X$. Essentially, if 
the follower can be motivated to select an optimal solution in $\S(\x)$ that is also best for the leader (i.e., with respect to $F$), it yields the so-called optimistic (strong) formulation of BLO
\begin{equation}
\min\limits_{\x\in\X}\left\{\min\limits_{\y\in\Y}F(\x,\y), \ s.t. \ \y\in\arg\min\limits_{\y\in\Y}f(\x,\y)\right\}. \label{eq:blo-o}
\end{equation}
The above stated optimistic viewpoint has drawn increasing attention in BLO literature~\cite{dempe2007new,kohli2012optimality,lampariello2020numerically} and recently also been investigated in learning and vision fields~\cite{liu2020generic,liu2021generic,liu2021value}. In Section \ref{sec:beyond}, we will further explore  how to solve such optimistic BLOs in detail. 

If the leader does not have the information whether the follower returns the best response $\y$ from $\S(\x)$ in terms of the UL objective $F$, then we have to assume that the follower is not cooperate with the leader. This is known as the pessimistic (weak) formulation of BLO~\cite{wiesemann2013pessimistic,loridan1996weak} and can be given as: 
\begin{equation}
\min\limits_{\x\in\X}\left\{\max\limits_{\y\in\Y}F(\x,\y), \ s.t. \ \y\in\arg\min\limits_{\y\in\Y}f(\x,\y)\right\}. \label{eq:blo-p-1}
\end{equation}
It should be pointed out that till now we still lack efficient gradient-based algorithms to address the pessimistic BLO problems\footnote{In Section~\ref{sec:pessimistic}, we will demonstrate that we can also obtain some practical gradient-based iteration scheme within our general algorithmic platform for the pessimistic formulation of BLO.}.

\subsection{BR-based Single-Level Reformulation}

In this work, we consider the optimal solution of the LL subproblem with a given UL variable $\x$ as the Best-Response (BR) of the follower (denoted as $\y^*(\x)$). Then we can interpret BLO as a game process, in which the leader $\x$ considers what BR of the follower $\y$ is, i.e., how it will respond once it has observed the quantity of the leader~\cite{kicsiny2014backward,dempe2020bilevel}.  Based on the above understanding, we can reformulate the three different categories of BLOs as a unified single-level optimization problem.

\textcolor[rgb]{0.00,0.00,0.00} {
Specifically, given the UL variable $\x$, we denote the corresponding BR mapping as $\y^*(\x)$. In fact, if considering the singleton BLO, $\y^*(\x)$ can be directly obtained by the unique LL solution.
While for the optimistic and pessimistic BLOs, we actually first define their Inner Simple Bi-level (ISB) subproblems (w.r.t., $\y$)\footnote{It is known that the simple bi-level optimization is just a specific BLO problem with only one variable~\cite{liu2020generic,dutta2020algorithms}.} as
\begin{equation}\small
	\mbox{Optimistic ISB:}  \min\limits_{\y\in\S(\x)}F(\x,\y) \ \mbox{and} \ \mbox{Pessimistic ISB:} \max\limits_{\y\in\S(\x)}F(\x,\y).\label{eq:optimis-y}
\end{equation}
Then by defining the solution set of ISB as $\widetilde{\S}(\x)$, we could consider any $\y^*(\x)\in\widetilde{\S}(\x)$ as the BR mapping, because points in $\widetilde{\S}(\x)$ all obtain the minimum/maximum of $F(\x,\y)$ in $\S(\x)$. Therefore, we can formulate the general BR mapping for different categories of BLOs as follows:
\begin{equation}
	\left\{\begin{array}{ll}
		\y^*(\x):=\arg\min\limits_{\y\in\Y}f(\x,\y), \quad \mbox{Singleton},\\
		\y^*(\x)\in \widetilde{\S}(\x):= \left\{\begin{array}{l}
			\arg\min\limits_{\y\in\S(\x)}F(\x,\y), \quad \mbox{Optimistic},\\
			\arg\max\limits_{\y\in\S(\x)}F(\x,\y), \quad \mbox{Pessimistic}.
		\end{array}\right.
	\end{array}\right.
	\label{eq:best-response}
\end{equation}
Based on Eq.~\eqref{eq:best-response}, we actually obtain the following value-function-based reformulation (a single-level optimization model) for BLOs stated in Eq.~\eqref{eq:blo}, i.e.,
\begin{equation}
\min\limits_{\x\in\X}\varphi(\x):=F(\x,\y^*(\x)),\label{eq:sl-blo}
\end{equation}
in which $\varphi(\x)$ actually can be used to uniformly represent the UL value-function of $F$ from the singleton, optimistic (i.e.,  $\inf_{\y\in \S(\x)}F(\x,\y)$) and pessimistic (i.e.,  $\sup_{\y\in \S(\x)}F(\x,\y)$) viewpoints. }


\subsection{A Unified Platform for Gradient-based BLOs}

Moving one step forward, the gradient of $\varphi$ w.r.t. the UL variable $\x$ can be written as\footnote{Please notice that we actually do not distinguish between the operation of the derivatives and partial derivatives to simplify our presentation.} 
\begin{equation}
\underbrace{\frac{\partial \varphi(\x)}{\partial \x}}_{\text{grad. of $\x$}}=
\underbrace{\frac{\partial F(\x,\y^*(\x))}{\partial\x}}_{\text{direct grad. of $\x$}} \quad +
\underbrace{\mathbf{G}(\x),}_{\text{indirect grad. of $\x$}}\label{eq:blo-gradient1}
\end{equation}
where the indirect gradient $G(\x)$ can be further specified as the following two components:
\begin{equation}
\mathbf{G}(\x)=\underbrace{\overbrace{\left(\frac{\partial\y^*(\x)}{\partial\x'}\right)'}^{\text{BR Jacobian}}\overbrace{\frac{\partial F(\x,\y^*(\x))}{\partial\y}.}^{\text{direct grad. of $\y$}}}_{\text{indirect grad. of $\x$}} \label{eq:blo-gradient2}
\end{equation}  
Here we use ``grad.'' as the abbreviation of gradient and denote the transpose operation as $(\cdot)'$. Note that, $\y^*(\x)$ as a general mapping, can be given specific constraints and necessary assumptions to fit their particular requirements for these specific gradient-based BLO approaches in order to obtain different iteration formats and theoretical properties. For details, please refer to the following contents.
In fact, by simple computation, the direct gradient is easy to obtain. However, the indirect gradient is intractable to obtain because we must compute the changing rate of
the optimal LL solution with respect to the UL variable (i.e., the BR Jacobian $\frac{\partial\y^*(\x)}{\partial\x}$). Please notice that we will also call $\frac{\partial \varphi(\x^k)}{\partial \x^k}$ as the practical BR Jacobian w.r.t. $\x^k$ in the following statements. 
The computation of the indirect gradient $\mathbf{G}(\x)$ naturally motives formulating $\y^*(\x)$ and hence $\frac{\partial\y^*(\x)}{\partial\x}$. 
For this purpose, a series of techniques have recently been developed from either explicit or implicit perspectives, which  obtain their optimal solutions by recurrent differentiation through dynamic system and based on implicit differentiation theory, respectively. 

Now we demonstrate how to formulate various existing gradient-based BLOs from a unified algorithmic platform. We first summarize a general BLO updating scheme in Alg.~\ref{alg:g-blo}. It can be seen that the key component of this algorithm is to calculate the BR Jacobian. Then with $\frac{\partial \varphi(\x^k)}{\partial \x^k}$, we can just perform standard (stochastic) gradient descent schemes to update $\x^k$. Based upon our general algorithmic platform, we can observe that the main differences of these existing BLO approaches are just their specific strategies for calculating Jacobian of the BR mapping under different conditions (i.e., w/ LLS and w/o LLS).

\begin{algorithm}
	\caption{A General Gradient-based BLO Scheme}\label{alg:g-blo}
	\begin{algorithmic}[1]
		\REQUIRE The UL and LL initialization.
		\ENSURE The optimal UL and LL solutions.
		\FOR{$k=1,\cdots,K$ }
		\STATE Calculate the BR Jacobian $\frac{\partial \varphi(\x^k)}{\partial \x^k}$.
		\\ \% (Mainstream calculation strategies are summarized in Figs.~\ref{fig:blp_fun-1}-\ref{fig:blp_fun} and thoroughly surveyed in the following sections)
		\STATE Perform (stochastic) gradient descent to update $\x^k$. \\ \% (based on $\frac{\partial \varphi(\x^k)}{\partial \x^k}$)
		\ENDFOR
	\end{algorithmic}  
\end{algorithm}
	
In Fig.~\ref{fig:blp_fun-1}, we summarize mainstream gradient-based BLOs and illustrate their intrinsic relationships within our general algorithmic platform. It can be observed that in the LLS scenario, from the BR-based perspective, existing gradient methods can be categorized as two groups: Explicit Gradient for Best-Response (EGBR, stated in Section~\ref{sec:egbr}) and Implicit Gradient for Best-Response (IGBR, stated in Section~\ref{sec:igbr}). As for EGBR, there are mainly three types of methods, namely, recurrence-based EGBR  (e.g., \cite{franceschi2017forward,maclaurin2015gradient,franceschi2018bilevel,shaban2019truncated,liu2018darts}), initialization-based EGBR (e.g., \cite{nichol2018first,nichol2018reptile} ) and proxy-based EGBR methods (e.g., \cite{li2016learning,lee2017meta,park2019meta,flennerhag2019meta}), differing from each other in the way of formulating the BR mapping. For IGBR, existing works consider two groups of techniques (e.g., linear system~\cite{pedregosa2016hyperparameter,rajeswaran2019meta} and Neumann series~\cite{lorraine2020optimizing}) to alleviate the computational complexity issue for the BR Jacobian. We emphasize that the validity of above BLO methodologies must depend on the singleton of their LL solution set. 
When solving BLOs without the LLS assumption, recent works in~\cite{liu2020generic,liu2021generic} have demonstrated that we need to first construct BR mapping based on both UL and LL subproblems, and then solve two optimization subproblems, namely, the single-level optimization subproblem (w.r.t. $\x$) and the ISB subproblem (w.r.t. $\y$). While the work in \cite{liu2021value} has introduced a series of barrier functions and utilized interior point methods to obtain the BR mapping for each given $\x$. 

To end up this section, we plot Fig.~\ref{fig:blp_fun} to illustrate the optimization processes of existing mainstream gradient-based BLO methods from the BR mapping perspective and within our unified algorithmic platform. In the following (i.e., Sections~\ref{sec:egbr}-\ref{sec:beyond}), we will thoroughly survey these different categories of gradient-based BLO algorithms (including their acceleration, simplification and extension techniques) and their theoretical properties (convergence behaviors and computational complexity), accordingly.

\begin{figure*}[t]
	\centering \begin{tabular}{c@{\extracolsep{0.2em}}c@{\extracolsep{0.2em}}c@{\extracolsep{0.2em}}c}
		\includegraphics[width=0.978\textwidth]{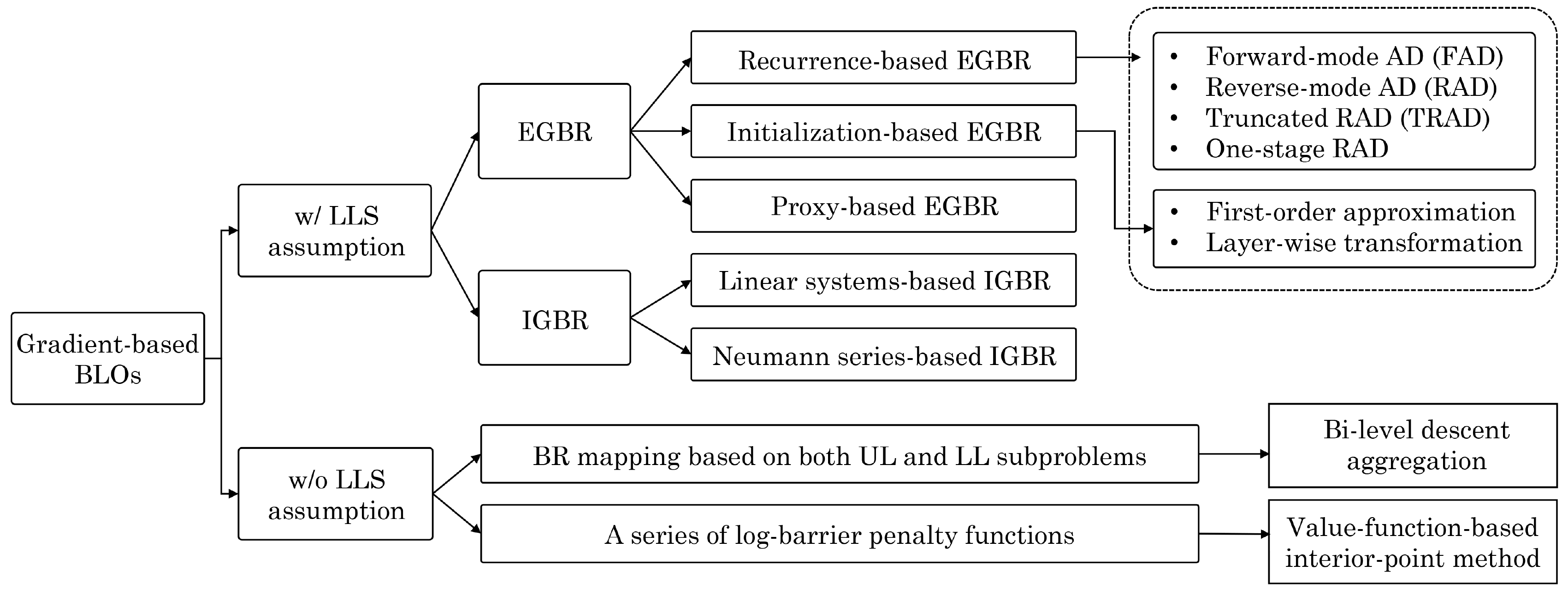}
	\end{tabular}
	\caption{Summary of the mainstream gradient-based BLOs. We categorize these existing approaches into two main groups, i.e., w/ and w/o LLS assumptions. When solving BLOs with LLS assumption, these methods can be further divided into two categories: EGBR and IGBR. As for EGBRs, they can be solved by different Automatic Differentiation (AD) techniques (as denoted by the dashed rectangle). Very recently, two algorithms have also been proposed to address BLOs without the LLS assumption. In particular, they actually introduce a bi-level gradient aggregation or a value-function-based interior-point method to calculate the indirect gradient. } \label{fig:blp_fun-1}
\end{figure*}

\begin{figure*}[t]
	\centering \begin{tabular}{c@{\extracolsep{0.2em}}c@{\extracolsep{0.2em}}c@{\extracolsep{0.2em}}c}
		\includegraphics[width=0.98\textwidth]{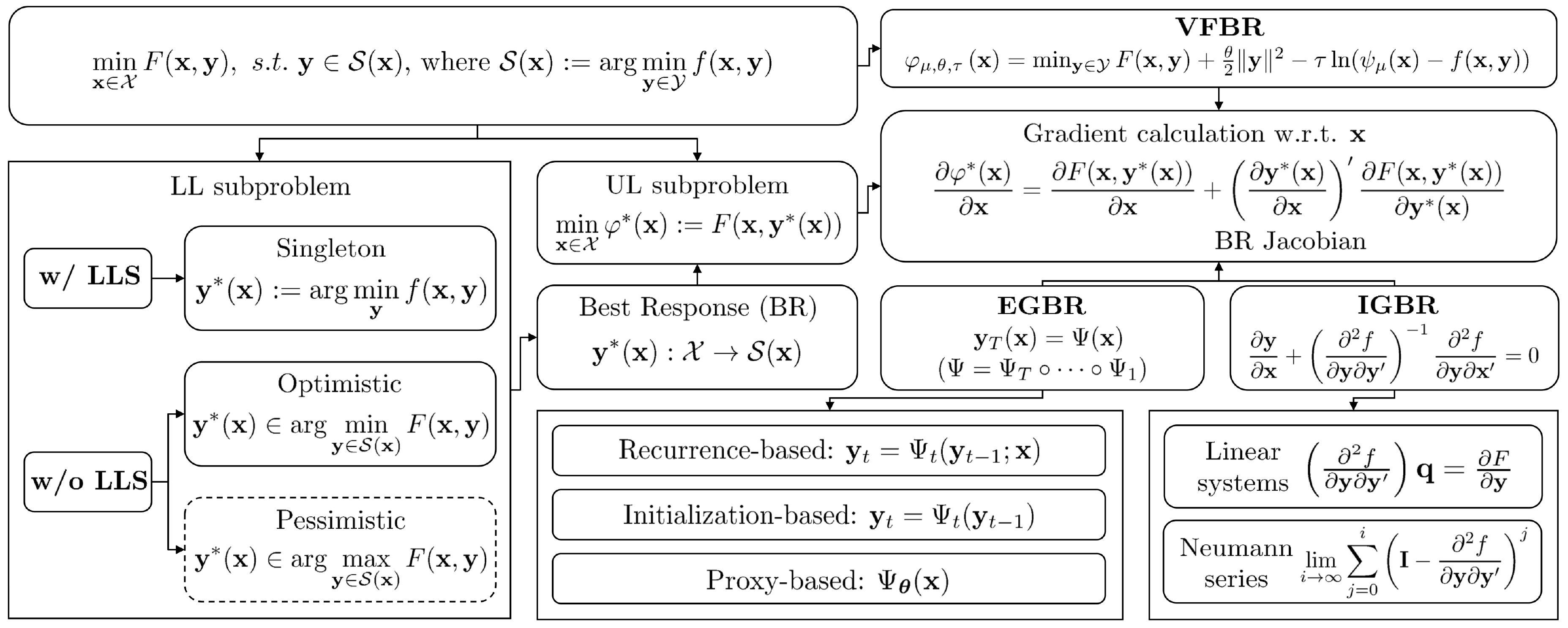}
	\end{tabular}
	\caption{Illustrating the roadmap of different categories of gradient-based BLOs. In the left bottom region, the formulations in the solid rectangles (i.e., singleton and optimistic) have been widely studied. In contrast, since gradient-based methods for pessimistic BLOs have not been properly investigated in existing literature, we denote this category of formulation by a dashed rectangle.  In Section~\ref{sec:pessimistic}, we demonstrate that we can also obtain a practical pessimistic BLO scheme within our general algorithmic platform.}\label{fig:blp_fun}
\end{figure*}

\section{Explicit Gradient for Best-Response}\label{sec:egbr}
With the LLS condition, we delve deep into the EGBR category of methods, which aims to perform automatic differentiation through the LL dynamic system~\cite{weinan2017proposal,lu2018beyond} to solve the BLO problem. Specifically, 
given an initialization $\y_0=\Psi_0(\x)$ at $t=0$, the iteration process of EGBRs can be generally written as 
\begin{equation}
\y_{t}=\Psi_t(\y_{t-1};\x), \ t=1,\cdots, T,\label{eq:ds}
\end{equation}
where $\Psi_t$ denotes some given updating scheme (based on the LL subproblem) at $t$-th stage and $T$ denotes the overall LL iterations number. 
For example, we can formulate $\Psi_t$ based on the gradient descent rule, i.e.,
\begin{equation}
\Psi_t(\y_{t-1};\x)=\y_{t-1}-\eta_t\mathbf{d}_f(\y_{t-1},\x),\label{eq:psi-gradient-descent}
\end{equation}
where $\mathbf{d}_f(\y_{t-1},\x)$ is the descent mapping of $f$ at $t$-th stage (e.g., $\mathbf{d}_f(\y_{t-1},\x)=\frac{\partial f(\x,\y_{t-1})}{\partial \y_{t-1}}$) and $\eta_t$ denotes the corresponding step size . 
Then we can calculate  $\frac{\partial \varphi(\x^k)}{\partial \x^k}$  by substituting $\y_T:=\Psi(\x)$ approximately for $\y^*(\x)$, and the full dynamical system can be defined as 
\begin{equation}
\Psi(\x):=\Psi_T\circ\cdots\circ\Psi_{1}\circ\Psi_{0}(\x). \label{eq:full-dynam}
\end{equation}
Here the notation $\circ$ represents the compound dynamical operation of the entire iteration.
That is, we actually consider the following optimization model
\begin{equation}
\min\limits_{\x\in\X}\varphi_T(\x):=F(\x,\y_T(\x)),\label{eq:blo-T}
\end{equation}
and need to calculate $\frac{\partial\varphi_T(\x)}{\partial\x}$ (instead of Eq.~\eqref{eq:blo-gradient1}) in the practical optimization scenario. 
Since it should be noted that $\Psi$ actually obtains an explicit gradient for best-response of the follower, we call this category of gradient-based BLOs as EGBR approaches hereafter. 
Starting from the Eq.~\eqref{eq:ds}, it is obvious to notice that $\y_{t}$ may be affected coupling with the variable $\x$ throughout the iteration. This coupling relationship will have a direct impact on the optimization process of UL variable in Eq.~\eqref{eq:blo-gradient1}. In fact, existing EGBR algorithms can be summarized from three perspectives. The first is that, if $\x$ closely acts on $\y_{t}$ during the whole iteration process, the subsequent optimization of variable $\x$ will be carried out recursively. The second is that when $\x$ only acts in the initial step, the subsequent optimization of variable $\x$ will be simplified. 
The third class is to replace the whole iterative process with a hyper-network, so as to efficiently approximate the BR mapping. 
Ultimately, in such cases, we divide them into three categories in terms of the coupling dependence of the two variables and the solution procedures, namely recurrence-based EGBR (stated in Section~\ref{sec:regbr}), initialization-based EGBR (stated in Section~\ref{sec:iegbr}) and proxy-based EGBR (stated in Section~\ref{sec:pegbr}).

\subsection{Recurrence-based EGBR} \label{sec:regbr}

It can be seen from Eq.~\eqref{eq:ds} that all the LL iterative variables $\y_0,\y_1,\cdots,\y_T$ depend on $\x$, and $\x$ acts as a recurrent variable of the dynamical system. One of the most well-known approaches for calculating $\frac{\partial\varphi_T(\x)}{\partial\x}$ (with the above recurrent structure) is Automatic Differentiation (AD)~\cite{baydin2014automatic,baydin2017automatic}, which is also called algorithmic differentiation or simply ``AutoDiff''. 
There exist two diametrically opposite ways on computing gradients for recurrent neural networks, of which one corresponds to back-propagation through time in a reverse-mode way~\cite{pearlmutter2008reverse,gomez2017reversible}, and the other corresponds to real-time recurrent learning in a forward-mode way~\cite{khan2015vector,revels2016forward}.  Quite a number of methods, closely related to this subject, have been proposed since then ~\cite{maclaurin2015gradient,franceschi2017forward,franceschi2018bilevel,shaban2019truncated}. 
Here we would like to review recurrence-based BR methods, covering forward-mode, reverse-mode AD, truncated and one-stage simplifications.

\textbf{Forward-mode AD (FAD):} To compute $\frac{\partial\varphi_T(\x)}{\partial\x}$, FAD appeals to the chain rule for the derivative of the dynamical system~\cite{franceschi2017forward}. Specifically, recalling that $\y_t=\Psi_t(\y_{t-1},\x)$, we have that the operation $\Psi_t$ indeed depends on $\x$ both directly by its expression and indirectly through $\y_{t-1}$. Hence, by drawing upon the chain rule, the formulation is given as\footnote{Please notice that here we actually require $\y_t(\x)$ to be a continuously differentiable function (w.r.t. $\x$) for all $t=1,\cdots,T$. In existing EGBRs, they just introduce differentiable $\Psi_t$ to meet this requirement.}
\begin{equation}
\frac{\partial\y_t}{\partial\x} = \frac{\partial \Psi_t(\y_{t-1};\x)}{\partial\y_{t-1}}\frac{\partial\y_{t-1}}{\partial\x}
+\frac{\partial\Psi_t(\y_{t-1};\x)}{\partial\x}.\label{eq:chainrule}
\end{equation}
To simplify the notation, we denote
$
\mathbf{Z}_t=\frac{\partial\y_{t}}{\partial\x}$, $\mathbf{A}_t=\frac{\partial\Psi_{t}(\y_{t-1};\x)}{\partial \y_{t-1}}$,  $\mathbf{B}_t=\frac{\partial\Psi_{t}(\y_{t-1};\x)}{\partial\mathbf{x}}$ for $t>0$ and $\mathbf{Z}_0=\mathbf{B}_0=\frac{\partial\Psi_0(\x)}{\partial\x}$. Then we
can rewrite Eq.~\eqref{eq:chainrule} as 
$\mathbf{Z}_t=\A_t\mathbf{Z}_{t-1}+\B_t$ $(t=1,\cdots,T)$. In this way, we have the following formulation to approximate the BR Jacobian
\begin{equation}
\frac{\partial \y_T(\x)}{\partial \x}=\mathbf{Z}_T =\sum\limits_{t=0}^T\left(\prod\limits_{i=t+1}^T\mathbf{A}_i\right)\mathbf{B}_t.
\label{eq:br-jacobian}
\end{equation}
Based on the above derivation, it is apparent that $\frac{\partial \varphi_T(\x)}{\partial \x}$ can be computed by an iterative algorithm summarized in Alg.~\ref{alg:fmad}. Actually, FAD allows the program to update parameters after each step, which may significantly speed up the dynamic iterator and take up less memory resources when the number of hyper-parameters is much smaller than the number of parameters. It can be time-prohibitive for many hyper-parameters with a more efficient and convenient way.
\begin{algorithm}
	\caption{Forward-mode AD (FAD)}\label{alg:fmad}
	\begin{algorithmic}[1]
		\REQUIRE The UL variable at the current stage $\x$ and the LL initialization $\y_0$.
		\ENSURE The gradient of $\varphi_T$ with respect to $\x$, i.e., $\frac{\partial \varphi_T}{\partial\x}$.
		\STATE $\mathbf{Z}_0=\frac{\partial\Psi_0(\x)}{\partial\x}$.
		\FOR{$t=1,\cdots,T$ }
		\STATE $\y_{t}=\Psi_t(\y_{t-1};\x)$.
		\STATE $\mathbf{Z}_t=\A_t\mathbf{Z}_{t-1}+\B_t$.
		\ENDFOR
		\STATE \textbf{return} $\frac{\partial F(\x,\y_T)}{\partial\x}+\mathbf{Z}_T'\frac{\partial F(\x,\y_T)}{\partial\y_T}$.
	\end{algorithmic}  
\end{algorithm}

\textbf{Reverse-mode AD (RAD):} 
RAD is a generalization of the back-propagation algorithm and based on a Lagrangian formulation associated with the parameter optimization dynamics. By replacing $\y^*(\x)$ by $\y_T$ and incorporating Eq.~\eqref{eq:br-jacobian} into Eq.~\eqref{eq:blo-gradient1}, a series of RAD works (e.g., \cite{maclaurin2015gradient,franceschi2018bilevel,franceschi2017forward}) derived 
\begin{equation}
\frac{\partial\varphi_T(\x)}{\partial\x}=\frac{\partial F(\x,\y_T)}{\partial\x} + 
\mathbf{Z}_T'
\frac{\partial F(\x,\y_T)}{\partial\y_T}.\label{eq:gradient-T}
\end{equation}
Rather than calculating $\mathbf{Z}_T$ by forward propagation as that in FAD (i.e., Alg.~\ref{alg:fmad}), the computation of Eq.~\eqref{eq:gradient-T} can also be implemented by back-propagation. That is, we first define
$\mathbf{g}_T=\frac{\partial F(\x,\y_T)}{\partial\x}$ and $\bm{\lambda}_T=\frac{\partial F(\x,\y_T)}{\partial\y_T}$. Then we update
$\mathbf{g}_{t-1}=\mathbf{g}_{t}+\mathbf{B}_{t}'\bm{\lambda}_{t}$, and  $\bm{\lambda}_{t-1}=\mathbf{A}_{t}'\bm{\lambda}_{t}$, with $t=T,\cdots,0$. Finally, we have that $\frac{\partial\varphi_T(\x)}{\partial\x}=\mathbf{g}_{-1}$.
Indeed, the above RAD calculation is structurally identical to back-propagation through time~\cite{franceschi2017forward}. Moreover, we can also derive it following the classical Lagrangian approach. That is, we reformulate Eq.~\eqref{eq:blo-T} as the following constrained model
\begin{equation}
\min\limits_{\x\in\X}\varphi_T(\x) \ \
s.t. \ \ \left\{\begin{array}{l} 
\y_0=\Psi_0(\x),\\
\mathbf{y}_{t}=\Psi_t(\y_{t-1};\x), \ t=1,\cdots,T.\label{eq:blo-c}
\end{array}\right.
\end{equation}
The corresponding Lagrangian function can be written as
\begin{equation}
\begin{array}{l}
\mathcal{L}(\x,\{\y_t\},\{\bm{\lambda}_t\})=\varphi_T(\x) + \bm{\lambda}_0'\left(\Psi_0(\x)-\y_0\right)\\
+\sum\limits_{t=1}^{T}\bm{\lambda}_t'\left(\Psi_t(\y_{t-1};\x)-\y_{t}\right),
\end{array}
\label{eq:lagra}
\end{equation}
where $\bm{\lambda}_t$ denotes the Lagrange multiplier associated with the $t$-th stage of the dynamic system. 
The KKT optimality condition of Eq.~\eqref{eq:blo-c} is obtained by setting all derivatives of $\mathcal{L}$ to zero, satisfying the condition that $\y_t(\x)$ is a continuously differentiable function w.r.t. $\x$ for the case that $t=1,\cdots,T$. 
Then by some simple algebras, we have $\frac{\partial \varphi_T(\x)}{\partial \x}=\frac{\partial\mathcal{L}}{\partial \x}$.
Overall, we present the RAD algorithm in Alg.~\ref{alg:rmad}.
\begin{algorithm}
	\caption{Reverse-mode AD (RAD)}\label{alg:rmad}
	\begin{algorithmic}[1]
		\REQUIRE The UL variable at the current stage $\x$ and the LL initialization $\y_0$.
		\ENSURE The gradient of $\varphi_T$ with respect to $\x$, i.e., $\frac{\partial \varphi_T}{\partial\x}$.
		\STATE $\y_0=\Psi_0(\x)$.
		\FOR{$t=1,\cdots,T$ }
		\STATE $\y_{t}=\Psi_t(\y_{t-1};\x)$.
		\ENDFOR
		\STATE $\mathbf{g}_T=\frac{\partial F(\x,\y_T)}{\partial\x}$ and $\bm{\lambda}_T=\frac{\partial F(\x,\y_T)}{\partial\y_T}$.
		\FOR{$t=T,\cdots,0$}
		\STATE $\mathbf{g}_{t-1}=\mathbf{g}_{t}+\mathbf{B}_{t}'\bm{\lambda}_{t}$ and $\bm{\lambda}_{t-1}=\mathbf{A}_{t}'\bm{\lambda}_{t}$.
		\ENDFOR
		\STATE \textbf{return} $\mathbf{g}_{-1}$.
	\end{algorithmic}  
\end{algorithm}

\textbf{Truncated RAD (TRAD):}
The above two precise calculation methods in many practical applications are tedious and time-consuming with full back-propagation training. As aforementioned, due to the complicated long-term dependencies of the UL subproblem on $\y_T(\x)$, calculating Eq.~\eqref{eq:gradient-T} in RAD is a challenging task. This difficulty is further aggravated when both $\x$ and $\y$ are high-dimensional vectors. 
More recently, the truncation idea has been revisited to address the above issue and shows competitive performance with significantly less computation time and memory~\cite{luketina2016scalable,baydin2017online,shaban2019truncated}. 
Specifically, by ignoring the long-term dependencies and approximating Eq.~\eqref{eq:gradient-T} with partial sums (i.e., storing only the last $M$ iterations),
we have
\begin{equation}
\frac{\partial \varphi_T(\x)}{\partial \x}\approx\mathbf{g}_{T-M}:=
\frac{\partial F(\x,\y_T)}{\partial\x} + 
\mathbf{Z}_{T-M}'
\frac{\partial F(\x,\y_T(\x))}{\partial\y_T},
\end{equation}
where $\mathbf{Z}_{T-M}=
\sum_{t=T-M+1}^{T}\left(\prod_{i=t+1}^{T}\mathbf{A}_i\right)\mathbf{B}_{t}$.
It can be seen that ignoring the long-term dependencies can greatly reduce the time and space complexity for computing the approximate gradients. Recently, the work in~\cite{shaban2019truncated} has investigated the theoretical properties of the above truncated RAD scheme, and confirmed this fact that using few-step back-propagation could perform comparably to optimization with the exact gradient, while requiring far less memory and half computation time.

\textbf{One-stage RAD:}
Limited and expensive memory is often a bottleneck in modern massive-scale deep learning applications. For instance, multi-step iteration of the inner program will cause a lot of memory consumption~\cite{finn2017model}. Inspired by BLO, a variety of simplified and elegant techniques have been adopted to circumvent this issue.
The work in~\cite{liu2018darts} proposes another simplification of RAD, which considers a fixed initialization $\y_0$ and only performs one-step iteration in Eq.~\eqref{eq:ds} to remove the recurrent structure for the gradient computation in Eq.~\eqref{eq:gradient-T}, i.e.,
\begin{equation}
\frac{\partial \varphi_1(\x)}{\partial \x}=
\frac{\partial F(\x,\y_1(\x))}{\partial\x} + 
\left(\frac{\partial\y_1(\x)}{\partial\x'}\right)'
\frac{\partial F(\x,\y_1(\x))}{\partial\y_1(\x)}.
\end{equation}
By formulating the dynamical system as that in Eq.~\eqref{eq:psi-gradient-descent}, we then write $\frac{\partial\y_1(\x)}{\partial\x}$ as
\begin{equation}
\frac{\partial\mathbf{y}_1}{\partial\mathbf{x}'}
=\frac{\partial \left(\y_0-\frac{\partial f(\x,\y_0)}{\partial\y_0}\right)}
{\partial \x'} =-\frac{\partial^2f(\x,\y_0)}{\partial\y_0\partial\x'}.\label{eq:one-step-hessian}
\end{equation}
Since calculating Hessian in Eq.~\eqref{eq:one-step-hessian} is still time consuming, to further simplify the calculation, we can adopt finite approximation \cite{liu2018darts} to cancel the calculation of the Hessian matrix (e.g., central difference approximation). The specific derivation can be formalized as follows:
\begin{equation}
\frac{\partial F(\x,\y_1)}{\partial\y_1}\frac{\partial^2f(\x,\y_0)}{\partial\y_0\partial\x'}\approx
\frac{\frac{\partial f(\x,\y_0^{+})}{\partial\x}-\frac{\partial f(\x,\y_0^{-})}{\partial\x}}{2\epsilon},
\end{equation}
in which $\y_0^{\pm}=\y_0\pm\epsilon\frac{\partial F(\x,\y_1)}{\partial\y_1}$.
Note that $\epsilon$ is set to be a small scalar equal to the learning rate~\cite{xu2019pc}.

\subsection{Initialization-based EGBR} \label{sec:iegbr}
The research community has started moving towards the challenging goal of building general purpose initialization-based optimization systems whose ability to learn the initial parameters better.
Regardless of the recurrent structure, we need to consider the special setting to analyze a family of algorithms for learning the initialization parameters, named initialization-based EGBR methods. 
In this series, MAML~\cite{finn2017model} is considered as the most representative and important work.
By making more practical assumptions about the coupling dependence of two variables, these methods no longer use the full dynamical system to explicitly and accurately describe the dependency between $\x$ and $\y$ as discussed above in Eq.~\eqref{eq:blo-c}, but adopt a further simplified paradigm.

Specifically, by treating the iterative dynamical system with only the first step that $\y$ is explicitly related to $\x$, this process can be formulated as
\begin{equation}
\min\limits_{\x\in\X}\varphi_T(\x) \ \
s.t. \ \ \left\{\begin{array}{l} 
\y_0=\Psi_0(\x),\\
\mathbf{y}_{t}=\Psi_t(\y_{t-1}), \ t=1,\cdots,T,\label{eq:blo-I}
\end{array}\right.
\end{equation}
where $\x$ represents the network initialization parameters, and $\y_t$ represents the network parameters after performing some sort of update. Given initial condition $\Psi_0(\x)$, then we obtain the following simplified formula
\begin{equation}
\mathbf{y}_T=\Psi_0(\x)-\sum\limits_{t=1}^{T}\mathbf{d}_f(\y_{t-1}),\label{eq:inner_in}
\end{equation}
where $\mathbf{d}_f(\y_{t-1})$ is the descent mapping of $f$ at the $t$-th stage (e.g., $\mathbf{d}_f(\y_{t-1})=\frac{\partial f(\x,\y_{t-1})}{\partial \y_{t-1}}$).
Finally, we have the Jacobian matrix as follows
\begin{equation}
\frac{\partial\mathbf{y}_{T}}{\partial\mathbf{x}}=\frac{\partial\left(\Psi_0(\x)-\sum\limits_{t=1}^{T}	\mathbf{d}_f(\y_{t-1})\right)}{\partial\mathbf{x}}.\label{eq:spec_der}
\end{equation}
Then we have to calculate the Hessian matrix term $\frac{\partial^2f}{\partial\y_{t-1}\partial\x'}$, which is time consuming in real computation scenario. 
To reduce the computational load, we will introduce two remarkably simple algorithms via a series of approximate transformation operations below. 
Among various schemes to simplify the algorithm based on initialization-based EGBR approaches, first-order approximation (e.g., \cite{nichol2018first,nichol2018reptile}) and layer-wise transformation (e.g., \cite{li2016learning,lee2017meta,park2019meta,flennerhag2019meta}) are among the more popular. Very recently, the works in \cite{raghu2019rapid,ji2020convergence} also consider the initialization as an auxiliary variable to improve the performance of RAD.

\textbf{First-order Approximation:}
For example, the most representative algorithms (i.e., FOMAML~\cite{nichol2018first} and Reptile~\cite{nichol2018reptile}) adopted the operation by first-order approximation, a way to alleviate the problem of Hessian term computation while not sacrificing much performance. 
Specifically, this approximation ignores the second derivative term by removing the Hessian matrix $\frac{\partial^2f}{\partial\y_{t-1}\partial\x'}$, and then simplifies substitution of $\frac{\partial \varphi_T(\x)}{\partial \x}$ performed by 
\begin{equation}
\frac{\partial \varphi_T(\x)}{\partial \x}=
\frac{\partial F(\x,\y_T(\x))}{\partial\x} + 
\left(\frac{\partial\Psi_0(\x)}{\partial\mathbf{x}'}\right)'
\frac{\partial F(\x,\y_T(\x))}{\partial\y_T(\x)}.
\end{equation}
In addition, there is another way of first-order extension to simplify Eq.~\eqref{eq:spec_der} through the operation of difference approximation~\cite{nichol2018reptile}. 
It no longer avoids the Hessian term but tries another soft way to approximate $\frac{\partial\mathbf{y}_{T}}{\partial\mathbf{x}}$ (i.e.,$ \mathbf{y}_T-\x$ and $(\y_T-\x)/\alpha$), in which $\alpha$ is the step size used in gradient decent operation. 
Unlike~\cite{nichol2018first}, this method proposed to use different linear combinations of all steps rather than using just the final step. 
But overall, the above algorithm could significantly reduce the computing costs while keeping roughly equivalent performance.

\textbf{Layer-wise Transformation:} 
Indeed, there are also a series of learning-based BLOs related to layer-wise transformation, i.e., Meta-SGD~\cite{li2016learning}, T-Net~\cite{lee2017meta}, Meta-Curvature~\cite{park2019meta} and WarpGrad~\cite{flennerhag2019meta}. 
In addition to initial parameters, this type of work focuses on learning some additional parameters (or transformation) at each layer of the network. 
From the above Eq.~\eqref{eq:inner_in}, it can be uniformly formulated as 
\begin{equation}
\mathbf{y}_T=\Psi_0(\x)-\sum\limits_{t=1}^{T}\mathbf{P}(\y_{t-1},\bm{\omega})\mathbf{d}_f(\y_{t-1}),\label{eq:layer_t}
\end{equation}
where $\mathbf{P}(\y_{t-1},\bm{\omega})$ defines the matrix transformation learned at each layer and $\bm{\omega}$ is an auxiliary vector (e.g., learning rate).
For example, Meta-SGD~\cite{li2016learning} learns a vector $\bm{\omega}$ of learning rates and $\mathbf{P}$ corresponded to $\mathtt{diag}(\bm{\omega})$, and 
T-Net~\cite{lee2017meta} aims to learn block-diagonal preconditioning linear projections. Similarly, an additional the block-diagonal preconditioning transformation is also performed by Meta-Curvature~\cite{park2019meta}. WarpGrad~\cite{flennerhag2019meta} is closely related to the concurrent work of Meta-Curvature~\cite{park2019meta}, which defines the preconditions gradient from a geometrical point of view and replaces the linear projection with a non-linear preconditioning matrix as a warp layer.
 
\subsection{Proxy-based EGBR} \label{sec:pegbr}
Generally speaking, calculating the BR mapping (or BR Jacobian) is key to solve BLOs. 
Recently, several proxy-based EGBR methods (e.g.,~\cite{mackay2019self,bae2020delta,lorraine2018stochastic}) utilize the differentiable hyper-network (denoted as  $\Psi_{\bm{\theta}}(\x)$ with parameters $\bm{\theta}$) to substitute the dynamic system $\Psi(\x)$ and then approximate the BR mapping\footnote{Note that, these methods assume $\y^*(\x)$ is a continuously differentiable function and $\X$ and $\Y$ denote the whole space~\cite{mackay2019self,bae2020delta,lorraine2018stochastic}.}, i.e., 
\begin{equation}
\Psi_{\bm{\theta}}(\x)\to\Psi(\x)\approx\y^*(\x).
\end{equation}
Specifically, they train a hyper-network that takes hyper-parameters $\x$ as input and outputs the approximate optimal set of weights as the optimal solution of the LL subproblem.   

In fact, both global and local proxy techniques have been considered to approximate the BR mapping. 
From the perspective of global approximation, first, if the distribution $p(\x)\subseteq\X$ is fixed, they learn $\bm{\theta}$ by minimizing $\mathbb{E}_{\x\sim p(\x)} f(\x,\Psi_{\bm{\theta}}(\x))$, so that $\Psi_{\bm{\theta}}(\x)$ can approximate the BR mapping in a neighborhood around the current~$\x$, and second update $\x$ with~$\Psi_{\bm{\theta}}$ as a proxy substituted into Eq.~\eqref{eq:blo-T}, i.e., 
\begin{equation}
\x^*\approx\arg\min_{\x\in\mathcal{X}}F(\x,\Psi_{\bm{\theta}}(\x)).\label{eq:stn-ul}
\end{equation}
For local approximation, by introducing a small UL disturbing term, they first minimize the objective $\mathbb{E}_{\epsilon \sim p(\epsilon|\delta)} f(\x+\epsilon,\Psi_{\bm{\theta}}(\x+\epsilon))$, where $\epsilon$ represents the perturbation noise added to $\x$, and $p(\epsilon|\delta)$ is defined as a factorized Gaussian noise distribution with a fixed scale parameter $\delta$. After that, the UL variable $\x$ is updated by minimizing the proxy function, i.e., Eq.~\eqref{eq:stn-ul}.

In comparison to other type EGBRs, proxy-based EGBRs can easily replace existing modules in deep learning libraries with hyper-counterparts that accept an additional vector of UL variable as input and adapt online, thereby requiring less memory consumption to meet the performance requirements.

\section{Implicit Gradient for Best-Response}\label{sec:igbr}
In contrast to the EGBR methods surveyed above, IGBR methods in essence can be interpreted as introducing Implicit Function Theory (IFT) to derive BR Jacobian \cite{rockafellar2009variational}. In particular, 
IGBR type BLOs only rely on the solution to the LL optimization and can effectively decouple the UL gradient computation from the choice of LL optimizer. 
Indeed, the gradient-based BLO methodologies with implicit differentiation are radically different from EGBR methods, which have been extensively applied in a string of applications (e.g., \cite{lorraine2020optimizing,rajeswaran2019meta,grazzi2020iteration}). As an example, a set of early IGBR approaches (e.g.,~\cite{chapelle2002choosing,seeger2008cross}) used implicit differentiation to select hyper-parameters of kernel-based models.  Recently, IGBR type approaches have been applied in different application scenarios, such as learning hyper-parameter for neural networks~\cite{lorraine2020optimizing} and variational models~\cite{kunisch2013bilevel}.

Now we demonstrate how to derive IGBRs to solve BLOs. Specifically, in the LLS optimization scenario, we first require that $f(\x,\y)$ satisfies the smooth condition (or at least twice continuously differentiable) w.t.r. both the UL and LL variables, and $\y^*(\x)$ is a continuously differentiable function w.r.t. $\x$. Then we can directly obtain the implicit gradient of $\x$ (i.e., $\mathbf{G}(\x)$) based on the first-order optimality condition (i.e., $\frac{\partial f(\x,\y^*(\x))}{\partial\y^*(\x)}=0$). That is,
by deriving the above equation w.r.t. $\x$, we have that 
$$\frac{\partial\y^*(\x)}{\partial\x'} +\left(
\frac{\partial^2f(\x,\y^*(\x))}{\partial\y^*(\x)\partial\y^*(\x)'}\right)^{-1}\frac{\partial^2 f(\x,\y^*(\x))}{\partial\y^*(\x)\partial\x'}=0. $$
By further assuming that $\frac{\partial^2f(\x,\y^*(\x)}{\partial\y^*(\x)\partial\y^*(\x)'}$ is invertible, 
and drawing upon the chain rule, the indirect gradient  $\mathbf{G}(\x)$ can be obtained as follows:
\begin{equation}
\begin{aligned}
\mathbf{G}(\x)= -\left(
\frac{\partial^2 f(\x,\y^*(\x))}{\partial\y^*(\x)\partial\x'}\right)'\left(\frac{\partial^2f(\x,\y^*(\x))}{\partial\y^*(\x)\partial\y^*(\x)'}
\right)^{-1} &\\ \frac{\partial F(\x,\y^*(\x))}{\partial \y^*(\x)}&.\label{eq:ibr}
\end{aligned}
\end{equation}
Intuitively, Eq.~\eqref{eq:ibr} has offered the exact indirect gradient formulation but is generally calculated based on numerical approximations in practice. 
From a computational point of view, due to involving a large number of repeated product operations of Hessian-vector and Jacobian-vector, EGBRs based on high-dimensional data are usually computationally expensive and time-consuming.  
Thus a few implicit techniques, such as IGBR based on linear system \cite{pedregosa2016hyperparameter,rajeswaran2019meta} and Neumann series \cite{lorraine2020optimizing}, have been proposed to address this computational issue.

\textbf{Based on Linear System:} To calculate the Hessian matrix inverse more efficiently, it is generally assumed that solving linear systems is a common operation (e.g., HOAG \cite{pedregosa2016hyperparameter}, IMAML \cite{rajeswaran2019meta}). 
Specially, $(\frac{\partial^2f}{\partial\y\partial\y'}
)^{-1} \frac{\partial F}{\partial \y}$ can be computed as the solution to the linear system $(\frac{\partial^2 f}{\partial \y \partial\y'})\mathbf{q}=\frac{\partial F}{\partial \y}$ for $\mathbf{q}$.  
Based on the above derivation, it is apparent that $\frac{\partial F}{\partial \x}$ can be directly computed by the algorithm summarized in Alg.~\ref{alg:igc}. 

\begin{algorithm}
	\caption{Implicit Gradient by Solving Linear System}\label{alg:igc}
	\begin{algorithmic}[1]
		\REQUIRE The UL variable at the current state, i.e., $\x$
		\ENSURE The gradient of $F$ with respect to $\x$, i.e., $\frac{\partial F}{\partial \x}$
		\STATE Optimize the LL variable up to tolerance $\epsilon$. That is, find ${\y_{\varepsilon}}$ such that 
		\begin{equation*}
		\|\y^*(\x)-{\y_{\varepsilon}}\|\leq\epsilon.
		\end{equation*}
		\STATE Solve the linear system 
		\begin{equation*}
		\left(\frac{\partial^2f(\x,\y_{\varepsilon})}{\partial\y_{\varepsilon}\partial\y_{\varepsilon}'}\right)\mathbf{q}=\frac{\partial F(\x,\y_{\varepsilon})}{\partial \y_{\varepsilon}},
		\end{equation*}
		for $\mathbf{q}$ up to the tolerance $\epsilon$, i.e.,
		$
		\left\|\left(\frac{\partial^2 f}{\partial\y_{\varepsilon}\partial\y_{\varepsilon}'}\right)\mathbf{q}-\frac{\partial F}
		{\partial\y_{\varepsilon}}\right\|\leq\epsilon.
		$
		\STATE Compute approximate gradient by
		\begin{equation*}
		\mathbf{p}=\frac{\partial F(\x,\y_{\varepsilon})}{\partial \x}
		- \left(
		\frac{\partial^2 f(\x,\y_{\varepsilon})}{\partial\y_{\varepsilon}\partial\x'}\right)'\mathbf{q}.
		\end{equation*}
		\STATE \textbf{return} $\mathbf{p}$.
	\end{algorithmic}  
\end{algorithm}

\textbf{Based on Neumann Series:}
Instead of solving the linear system, another type of IGBM (i.e., Neumann IFT~\cite{lorraine2020optimizing}) method aims to calculate the Neumann series to approximate the inverse of Hessian matrix. 
Specifically, rather than solving the linear system in the second step of Alg.~\ref{alg:igc}, the inverse Hessian is expressed as the following Neumann series:
\begin{equation*}
\left(\frac{\partial^2 f}{\partial \y \partial\y'}\right)^{-1}= 
\lim_{i\to\infty}\sum\limits_{j=0}^i 
\left(\mathbf{I}-\frac{\partial^2 f}{\partial\y\partial\y'}\right)^j,
\end{equation*}
where $\mathbf{I}$ denotes an identity matrix with proper size.  If the operator $\mathbf{I}-\frac{\partial^2 f}{\partial\y\partial\y'}$ is contractive, it leverages that unrolling differentiation for $i$ steps around locally optimal weights $\y^*(\x)$  is equivalent to approximating the inverse with the first $i$ terms in Neumann series. 
In this way, the entire computation can efficiently perform vector-Jacobian products, thus providing a cheap approximation to the inverse-Hessian-vector product. 


\section{BLO beyond Lower-Level Singleton} \label{sec:beyond}

As stated in the above Sections~\ref{sec:framework}-\ref{sec:igbr}, different categories of gradient-based algorithms have been proposed to address BLOs. However, most of these approaches rely on the LLS assumption (i.e., the solution set of the LL subproblem is a singleton) stated in Section \ref{sec:framework}  to simplify their optimization process and theoretical analysis. That is to say, the sequence $\{\mathbf{y}_{t}\}_{t=0}^T$ generated by these mainstream methods could converge to the true optimal solution only if the LLS condition is satisfied. 
Unfortunately, it has been demonstrated that such LLS assumption is too restrictive to be satisfied in most real-world learning and vision applications. For example, the works in~\cite{liu2020generic,liu2021generic} have designed a series of counter-examples to illustrate that these existing EGBRs cannot obtain the correct solution if the LLS assumption is not satisfied. 

In this section, we review some recent works~\cite{liu2020generic,liu2021generic,liu2021value}, which can efficiently address the LLS issue in the optimistic BLO scenario. The key optimization process of these works is to obtain the solution set of the ISB (i.e., Eq.~\eqref{eq:optimis-y}).
That is, these works actually adopted different techniques, such as the UL and LL gradient aggregation \cite{liu2020generic,liu2021generic} and value-function-based interior-point method~\cite{liu2021value} to solve Eq.~\eqref{eq:optimis-y} for BLOs without the LLS condition.
 
\textbf{UL and LL Gradient Aggregation:} 
Differing from previous EGBR type methods which only rely on the gradient information of the LL subproblem to update $\y$, a more generalized EGBR type method, Bi-level Descent Aggregation (BDA) method~\cite{liu2020generic}, characterizes an aggregate computation of both the LL and the UL descent information. 
With a given UL variable $\x$, the aggregated descent direction w.r.t. the ISB subproblem (i.e., Eq.~\eqref{eq:optimis-y}) can be defined as
\begin{equation}
\mathbf{d}(\mathbf{y}_{t-1};\x)=\rho_t\frac{\partial F(\x,\y_{t-1})}{\partial \y_{t-1}}+(1-\rho_t)\frac{\partial f(\x,\y_{t-1})}{\partial \y_{t-1}}, \label{eq:agg-gra-opt}
\end{equation}
where ${\rho_t\in(0,1]}$ is the aggregation parameter (tending to zero~\cite{solodov2007explicit,Sabachblp}), and $\frac{\partial F(\x,\y_{t-1})}{\partial \y_{t-1}}$ (or $\frac{\partial f(\x,\y_{t-1})}{\partial \y_{t-1}}$) stands for the descent directions of the UL (or LL) objectives. 

\textbf{Value-Function-based Interior-point Method:}
Different from EGBRs and IGBRs, a more recent Value-Function Best-Response (VFBR) type BLO methods reformulate BLO into a ISB optimization problem by the value function of the UL objective. After that, they further transform it into a single-level optimization problem with an inequality constraint through the value function of the LL objective. Recently, a typical VFBR work, named Bi-level Value-Function-based Interior-point Method (BVFIM)~\cite{liu2021value}, has designed a log-barrier penalty-based single-level reformulation for Eq.~\eqref{eq:optimis-y} to address the LLS issue in the non-convex scenario.
Specifically, BVFIM first reformulates the ISB subproblem in Eq.~\eqref{eq:optimis-y} as follows:
\begin{equation}
\min\limits_{\y \in \Y } F(\x,\y),   
\ \mathrm{ \ s.t.\  }\  f(\x,\y)\leq \psi_{\mu}(\x),\label{eq:con_bilevel}
\end{equation}
where $\psi_{\mu}(\x)$ is a regularized value function of the LL subproblem, i.e., 
\begin{equation}
\psi_{\mu}(\x)=\min\limits_{\y \in \Y}f(\x,\y)+\frac{\mu_1}{2}\Vert \y \Vert^{2}+\mu_2.\label{eq:BVFIM2}
\end{equation}
Here $\mu_1, \mu_2$ are two positive constants and we denote $\mu = (\mu_1, \mu_2)$. 
Then the relaxed inequality constraint $f(\x,\y)\leq \psi_{\mu}(\x)$ is penalized to the objective by a log-barrier penalty and thus Eq.~\eqref{eq:con_bilevel} can be approximated by
\begin{equation}\small
\varphi_{\mu,\theta,\tau}\left(\x\right) =\min_{\y \in \Y} F(\x,\y) + \frac{\theta}{2}\Vert \y \Vert^{2}-\tau \ln (\psi_{\mu}(\x)-f(\x,\y)), \label{eq:BVFIM}
\end{equation}
where $(\mu,\theta,\tau)>0$. Finally, BVFIM proves that indirect gradient $\mathbf{G}(\x)$ in Eq.~\eqref{eq:blo-gradient2} can be obtained by solving a series of Eq.~\eqref{eq:BVFIM} with decreasing parameters $(\mu,\theta,\tau)$ (tending to zero). It should be noticed that BVFIM can successfully avoid these time-consuming Hessian-vector and Jacobian-vector products, which are necessary in previous gradient-based BLOs. So this method is more suitable for BLO tasks with complex LL subproblems.


\section{Theoretical Investigations}\label{sec:convergence}
\textcolor[rgb]{0.00,0.00,0.00} {In addition to modeling various learning and vision applications from the perspective of BLO and establishing a general algorithmic framework to unify different categories of existing BLO algorithms, in this section, we further investigate some important theoretical issues of BLOs, including the convergence behaviors and computational complexity of gradient-based BLOs, which actually can provide us insights and guidance in practical application scenarios  (e.g., adopt/design proper BLO methods).}

\subsection{Convergence Properties and Required Conditions}


In existing literature, two categories of convergence properties have been proved for gradient-based BLOs. The first type is  ``convergence towards stationarity'', which guarantees that the UL value-function can converge to a first-order stationary point satisfying $\lim_{K \rightarrow \infty}\Vert \frac{\partial\varphi(\x_T^K)}{\partial\x_T^K} \Vert=0$. 
Here we actually consider the convergence property of the UL variable, i.e., the number of UL iteration $K$ tends to infinity (with fixed number of LL iteration $T$).
The other convergence results actually characterize the following properties: $\x_T\xrightarrow[]{s}\x^*$\footnote{Here we use ``$\xrightarrow[]{s}$'' to denote subsequential convergence.} and $\inf_{\x \in \X}\varphi_T(\x) \rightarrow \inf_{\x \in \X} \varphi(\x)$) when $T\to\infty$. That is, they prove that for any limit point $\bar{\x}$ of the sequence $\{\x_T\}$, if $\x_T$ is a global (resp. local) minimum of $\varphi_T(\x)$, then $\bar{\x}$ is a global (resp. local) minimum of $\varphi(\x)$. For convenience, we call this type of property as ``convergence towards global/local minimum''\footnote{We will provide more details on this convergence property in the following subsection (i.e., Theorem~\ref{thm:general}).}. 
In Table~\ref{tab:conver_al}, we analyze the convergence properties and conditions required by the UL and LL subproblems for different categories of gradient-based BLOs, including EGBRs (e.g., RHG~\cite{franceschi2018bilevel}, TRAD~\cite{shaban2019truncated}, HF-MAML~\cite{fallah2020convergence}, STN~\cite{mackay2019self } and BDA~\cite{liu2020generic}), IGBRs (e.g., HOAG~\cite{pedregosa2016hyperparameter} and IMAML~\cite{ rajeswaran2019meta }) and VFBR (e.g., BVFIM~\cite{liu2021value}).

To guarantee the convergence to stationary solutions, some EGBRs (e.g., TRAD~\cite{shaban2019truncated}, HF-MAML~\cite{fallah2020convergence} and STN~\cite{mackay2019self}) required the first-order Lipshitz assumption for the UL and LL objectives (i.e., ``${L}_F$'' and ``${L}_f$'' for short) and the twice continuously differentiable property for the LL objective. 
In addition, there are also some EGBRs that require additional strong assumptions to obtain the first-order stationary points. For instance, HF-MAML~\cite{fallah2020convergence} relies on second-order Lipshitz assumption (denoted as ``Lipschitz-Hessian'') for the LL objective, while STN~\cite{mackay2019self } needs the nonsingular Hessian assumption for the LL objective. 	As for IGBRs (e.g., HOAG~\cite{pedregosa2016hyperparameter} and IMAML~\cite{rajeswaran2019meta}), they generally require that the gradient (w.r.t. $\y$) of both the UL objective and the LL objective are Lipschitz continuous. As another mainstream EGBR, the work~\cite{franceschi2018bilevel} requires that the LL dynamic system $\{\y_T(\x)\}$ is uniformly bounded on $\mathcal{X}$ and $\y_T(\x)$ uniformly converges to $\y^*(\x)$ when $T \rightarrow \infty$. Then we can obtain the convergence towards the global/local minimum. 
As for IGBRs, both the Lipshitz Hessian and nonsingular Hessian are key properties to guarantee their stationarity convergence~\cite{pedregosa2016hyperparameter,fallah2020convergence,rajeswaran2019meta}.

\begin{table*}[htb]
	\centering
	\caption{Summarizing the convergence results of mainstream gradient-based methods for BLOs within our framework. } \label{tab:conver_al}	
	\begin{threeparttable}
		\begin{tabular}{|c|c|c|c|c|c|c|c|c|c|c|}
			\hline 
			\multirow{2}{*}{Category} & \multirow{2}{*}{Method } &\multirow{2}{*}{LLS}  & \multicolumn{2}{c|}{UL} & \multicolumn{5}{c|}{LL} &\multirow{2}{*}{Main convergence results}  \\
			\cline{4-10}
			& & &${L}_F$ &SC & ${L}_f$  & $\mathcal{C}^2$  &  SC  & Lip-Hess & NS-Hess &   \\
			\cline{1-11}	
			\cline{2-10}	
			\multirow{7}{*}{EGBR}  
			& TRAD~\cite{shaban2019truncated}  & \cmark & \xmark &\xmark& \cmark& \cmark  & \cmark &\xmark & \cmark&  \multirow{3}{*}{\tabincell{c}{Stationarity:  $\frac{\partial\varphi(\x_T^K)}{\partial\x_T^K} \rightarrow 0$.} } \\
			\cline{2-10}	
			& HF-MAML~\cite{fallah2020convergence}  & \cmark & \cmark &\xmark& \cmark& \cmark & \xmark& \cmark &\xmark&  \\
			\cline{2-10}	
			& STN \cite{mackay2019self }& \cmark & \xmark &\xmark & \cmark &  \cmark & \cmark & \xmark  &\cmark&    \\
			\cline{2-11}
			&RHG \cite{franceschi2018bilevel}   & \cmark & \xmark &\xmark& \xmark& \xmark & \xmark &\xmark & \xmark &\multirow{5}{*}{ \tabincell{c}{Global/local minimum: \\$\x_T \xrightarrow[]{s} \x^*,$ \\ $\inf_{\x \in \X}\varphi_T(\x) \rightarrow \inf_{\x \in \X} \varphi(\x)$.}} \\ 
			\cline{2-10}		 
			& \multirow{2}{*}{BDA \cite{liu2020generic} }  & \cmark & \cmark&\xmark& \cmark & \xmark & \xmark & \xmark &\xmark &  \\
			\cline{3-10}	
			&  & \xmark & \cmark &\cmark& \cmark& \xmark & \xmark & \xmark &\xmark  &  \\
			\cline{2-10}	
			& \multirow{1}{*}{BDA \cite{liu2021generic} }  & \xmark & \cmark&\xmark& \cmark & \xmark & \xmark & \xmark &\xmark &   \\			
			\cline{1-10}
			VFBR &  BVFIM \cite{liu2021value} & \xmark & \xmark &\xmark& \xmark& \xmark & \xmark & \xmark& \xmark  &   \\
			\hline
			\hline
			\multirow{2}{*}{IGBR} & HOAG \cite{pedregosa2016hyperparameter} & \cmark & \cmark &\xmark& \cmark& \cmark & \cmark &\cmark & \cmark  & \multirow{2}{*}{\tabincell{c}{Stationarity:  $\frac{\partial\varphi(\x_T^K)}{\partial\x_T^K} \rightarrow 0$.} } \\
			\cline{2-10}
			& IMAML \cite{rajeswaran2019meta}  & \cmark & \cmark  &\xmark& \cmark& \cmark &  \cmark& \cmark &\cmark &     \\
			\hline
		\end{tabular} 
		\begin{tablenotes} 
			\footnotesize
			\item[1] Notice that $F(\x,\y)$ is continuously differentiable on $\X\times\Y$ ($\X$ is a compact set) and $f(\x,\y)$ is continuously differentiable on $\mathbb{R}^m\times\Y$. The feasible solution set $\Y$ represents the whole space $\mathbb{R}^n$.  
			\item[2] ``${L}_F$ (resp. ${L}_f$)'' means the gradient of  $F(\x, \cdot) $ (resp. $f(\x, \cdot) $) is Lipschitz continuous with Lipschitz constant ${L}_F$ (resp. ${L}_f$). SC means  strongly convex and $\mathcal{C}^2$ implies that $f(\x,\cdot)$ is second-order continuously differentiable w.r.t. $\y$. 
			``NS-Hess'' and ``Lip-Hess'' represent the nonsingularity and Lipschitz properties of Hessian $\frac{\partial^2 f}{\partial \y \partial\y'}$, respectively.  Please refer to \cite{fallah2020convergence,rajeswaran2019meta,pedregosa2016hyperparameter} for more details on these variational analysis concepts. 
			\item[3] Here we respectively represent ``required'' and ``not required'' by ``\cmark'' and ``\xmark'' for these properties. 
			\item[4] We summarize two kinds of convergent properties, i.e., ``stationarity'' and ``global/local minimum''. 
			The former implies that the gradient descent on the UL value-function converges to first-order stationary points satisfying $\lim_{K \rightarrow \infty}\Vert \frac{\partial\varphi(\x_T^K)}{\partial\x_T^K} \Vert=0$ (with fixed number of LL iterations $T$), while the latter characterizes the convergence towards global/local minimum satisfying $\x_T \xrightarrow[]{s} \x^*$ and $\inf_{\x \in \X}\varphi_T(\x) \rightarrow \inf_{\x \in \X} \varphi(\x)$ as $T\to\infty$.
		\end{tablenotes} 
	\end{threeparttable} 
\end{table*}

\subsection{A General Proof Template for EGBRs}

In this subsection, we would like to further provide a general proof template to analyze the convergence behaviors (i.e., convergence towards global/local minimum) of EGBR  methods in more detail. In particular, given the output of the LL dynamic system (i.e., $\y_T(\x)$), we first introduce two elementary properties on it as follows: 
\begin{enumerate}
	\item[(1)] \textbf{Uniform approximation quality to the LL solution:} $\{\y_{T}(\x)\}$ is uniformly bounded on $\X$, and for any $\epsilon>0$, there exists $t(\epsilon)>0$ such that whenever $T>t(\epsilon)$, we have
	\begin{equation*}
	\sup_{\x \in \X}\left\{ f(\x,\y_T(\x)) - \psi(\x) \right\} \le \epsilon,
	\end{equation*}
	or 
	\begin{equation*}
	\sup_{\x \in \X}\Vert\frac{\partial f(\x,\y_T(\x))}{\partial \y_T(\x)} \Vert \le \epsilon,
	\end{equation*}
    \textcolor[rgb]{0.00,0.00,0.00} {where $\psi(\x)$ denotes the LL value-function, i.e., $\psi(\x):=\min_{\y\in\Y}f(\x,\y)$.} 
	\item[(2)] \textbf{Point-wise approximation quality to the ISB solution:} For each $\x \in \mathcal{X}$, we have $$\lim_{T \rightarrow \infty}\mathtt{dist}(\y_T(\x),\widetilde{\S}(\x))=0,$$ where $\widetilde{\S}(\x)$ represents the solution set of the ISB subproblem in Eq.~\eqref{eq:optimis-y} and $\mathtt{dist}(\cdot,\cdot)$ denotes the point-to-set distance.
\end{enumerate}
Equipped with the above two properties on $\{\y_{T}(\x)\}$, we can present general convergence results of Eqs.~\eqref{eq:ds}-\eqref{eq:blo-T} in the following theorem\footnote{Here we actually provide a brief proof roadmap, which is summarized based on theoretical studies in existing works
		\cite{franceschi2017forward,liu2021value,liu2020generic,liu2021generic}. }.
\begin{thm}\label{thm:general} (Convergence towards global/local minimum) 
	Suppose that the generated sequence $\left\{\y_t(\x) \right\}$ satisfies the above two properties. Let $\x_T$ be a global (resp. local) minimum of $\varphi_{T}(\x)$, i.e., $\x_T\in\arg\min_{\x\in\X}\varphi_{T}(\x)$. Then we have 
	\begin{itemize}
	\item[(1)] Any limit point $\bar{\x}$ of the sequence $\{\x_T\}$ is a global (resp. local) minimum of $\varphi(\x)$, i.e., $\bar{\x}\in\arg\min_{\x\in\X}\varphi(\x)$. 
	\item[(2)] $\inf_{\x \in \X}\varphi_T(\x) \rightarrow \inf_{\x \in \X} \varphi(\x)$ as $T \rightarrow \infty$.
	\end{itemize}      
\end{thm}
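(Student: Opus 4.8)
The plan is to recognize Theorem~\ref{thm:general} as a variational (epi-)convergence statement and to establish it by sandwiching $\inf_{\x\in\X}\varphi_T(\x)$ between a $\limsup$ bound and a matching $\liminf$ bound, each drawing on exactly one of the two stated properties. I would work in the optimistic case, where $\varphi(\x)=\min_{\y\in\S(\x)}F(\x,\y)=F(\x,\y^*(\x))$ with $\y^*(\x)\in\widetilde{\S}(\x)$, so that every point of $\widetilde{\S}(\x)$ attains the common value $\varphi(\x)$; the singleton case is a special instance, and the local version is obtained at the end by localization. Throughout I use that $F$ and $f$ are continuous and $\X$ is compact.

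First I would prove pointwise convergence $\varphi_T(\x)\to\varphi(\x)$ for each fixed $\x$. Since $\mathtt{dist}(\y_T(\x),\widetilde{\S}(\x))\to 0$ by the point-wise approximation property and $\widetilde{\S}(\x)$ is closed, I pick $\y_T^*\in\widetilde{\S}(\x)$ with $\|\y_T(\x)-\y_T^*\|\to 0$; continuity of $F(\x,\cdot)$ over the bounded region then gives $F(\x,\y_T(\x))-F(\x,\y_T^*)\to 0$, and $F(\x,\y_T^*)=\varphi(\x)$. Since $\inf_{\x\in\X}\varphi_T(\x)\le\varphi_T(\x)$ for every $\x$, letting $T\to\infty$ and then taking the infimum over $\x$ yields the easy half $\limsup_{T}\inf_{\x\in\X}\varphi_T(\x)\le\inf_{\x\in\X}\varphi(\x)$.

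The delicate half is the matching $\liminf$ bound at a limit point. Let $\x_{T_k}\to\bar{\x}$. By the uniform boundedness in the first property, $\{\y_{T_k}(\x_{T_k})\}$ is bounded, so along a further subsequence $\y_{T_k}(\x_{T_k})\to\bar{\y}$. The crucial step is to certify that the limit is feasible, i.e. $\bar{\y}\in\S(\bar{\x})$. Here I would use the value-form of property~1, $f(\x_{T_k},\y_{T_k}(\x_{T_k}))-\psi(\x_{T_k})\le\epsilon_{T_k}\to 0$, together with the fact that $\psi=\inf_{\y}f(\cdot,\y)$ is upper semicontinuous (being a pointwise infimum of continuous functions, so no extra assumption is needed); passing to the limit gives $f(\bar{\x},\bar{\y})\le\psi(\bar{\x})$, hence $\bar{\y}\in\S(\bar{\x})$. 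Consequently $\varphi(\bar{\x})\le F(\bar{\x},\bar{\y})=\lim_k F(\x_{T_k},\y_{T_k}(\x_{T_k}))=\lim_k\inf_{\x}\varphi_{T_k}(\x)$, using that $\x_{T_k}$ minimizes $\varphi_{T_k}$.

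Combining the two inequalities squeezes every quantity together: $\varphi(\bar{\x})\le\liminf_k\inf_\x\varphi_{T_k}(\x)\le\limsup_T\inf_\x\varphi_T(\x)\le\inf_\x\varphi(\x)\le\varphi(\bar{\x})$, forcing all terms to coincide. This simultaneously shows $\bar{\x}$ is a global minimizer of $\varphi$ (claim~(1)) and that $\inf_\x\varphi_T(\x)\to\inf_\x\varphi(\x)$ (claim~(2)). For the local-minimum version I would rerun the argument with $\X$ replaced by a fixed closed ball around $\bar{\x}$ on which $\x_T$ is eventually a constrained minimizer. The main obstacle is precisely the feasibility step $\bar{\y}\in\S(\bar{\x})$: the iterates $\y_T(\x)$ need not lie in $\S(\x)$, so their limit must be certified feasible through the approximate-optimality estimate of property~1 and the (cost-free) upper semicontinuity of $\psi$, while carefully coordinating the $T\to\infty$ and $\epsilon\to 0$ limits through the subsequence extraction; if only the gradient-form of property~1 were available, certifying feasibility would additionally require lower-level convexity.
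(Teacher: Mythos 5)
Your proposal is correct and follows essentially the same route as the paper's own proof roadmap: pointwise convergence $\varphi_T(\x)\to\varphi(\x)$ from the point-wise approximation property, feasibility $\bar{\y}\in\S(\bar{\x})$ of the subsequential limit via the value-form of property~1 and upper semicontinuity of $\psi$, the resulting inequality chain at a limit point, the $\limsup$ bound on the infimum, and localization for the local case. Your squeeze presentation merely reorders the paper's Steps~1--5 and fills in the same details the paper leaves as a sketch.
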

\begin{proof}
In the following, we first state the key steps for proving convergence properties in the global scenario and then demonstrate how to obtain the local convergence properties accordingly.

Step~1. We should first verify that for $\bar{\x}\in \X$, $\psi$ satisfies
$$\limsup_{\x \rightarrow \bar{\x}}\psi(\x)= \psi(\bar{\x}).$$

Step~2. Then for any limit point $\bar{\x}$ of the sequence $\{\x_T\}$, there exist $\y_m(\x_m) \rightarrow \bar{\y}$ for a subsequence $\{\x_{m}\}$ and some $\bar{\y}$. Thus we can obtain $\bar{\y} \in \S(\bar{\x})$.

Step~3. Next, we verify the convergence property of the UL objective as follows: $$\lim_{T \rightarrow \infty}\varphi_T(\x) =  \varphi(\x).$$

Step~4. For any $\epsilon > 0$, we verify the following inequality: $$\varphi(\bar{\x}) \le F(\x_m,\y_m(\x_m)) + \epsilon \le \lim_{m \rightarrow \infty}\varphi_m(\x) + \epsilon, \ \forall \x \in \X. $$

Step~5. Finally, we verify the following inequality $$\limsup_{T \rightarrow \infty} \left\{\inf_{\x \in \X}\varphi_T(\x) \right\}\le \inf_{\x \in \X} \varphi(\x).$$ 
Thus we can obtain convergence results stated in Theorem~\ref{thm:general}. 

For convergence to the local minimum, we actually consider $\x_T$ as a local minimum of $\varphi_{T}(\x)$ with uniform neighborhood modulus $\delta > 0$. Then any limit point $\bar{\x}$ of the sequence $\{\x_T\}$ is a local minimum of $\varphi(\x)$, i.e.,  there exists $\tilde{\delta} > 0$ such that $\varphi(\bar{\x}) \le \varphi(\x), \forall \x \in \mathbb{B}_{\delta} (\bar{\x})\cap \X$. According to the neighborhood property to spread out the analysis, the result of convergence towards local minimum can also be proved by the same steps.
\end{proof}

The above theoretical results actually provide us a general recipe to analyze the iteration behaviors and convergence properties of gradient-based BLOs, especially for EGBRs. In other words, we can understand that these existing numerical schemes and their required assumptions on the UL and LL subproblems are just to meet the above elementary iteration properties. 

It can be observed that classical EGBRs (e.g.,~\cite{franceschi2018bilevel,shaban2019truncated}) require to first enforce the LLS assumption on the BLO problem. 
\textcolor[rgb]{0.00,0.00,0.00} {
The work in~\cite{franceschi2018bilevel} assumes that the UL and LL objectives are continuously differentiable and also enforces the restrictive (local) strong convexity assumption on the LL objective. In fact, such properties can ensure the uniform convergence of $\{\y_T(\x)\}$ towards $\y^*(\x)$, thus lead to the two elementary properties. In fact, the LLS assumption considered in~\cite{franceschi2018bilevel} is more strict than that required in the proof template.
The works in~\cite{liu2020generic,liu2021generic} also consider that the UL and LL objectives are continuously differentiable, but make a weaker assumption on the LL objective, i.e., $f(\x,\y)$ is level-bounded in $\y$ and locally uniform in $\x\in\X$ (or the gradient of $f(\x, \cdot) $ is Lipschitz continuous). Indeed, it can be verified that the conditions in~\cite{liu2020generic,liu2021generic} can also ensure two elementary properties required by our proof template.}
Therefore, we have that the above two elementary convergence properties hold and we can obtain the convergence results stated in Theorem~\ref{thm:general}.

It has been verified in \cite{liu2020generic,liu2021generic} that these classical EGBRs~\cite{franceschi2018bilevel,shaban2019truncated} may lead to incorrect solutions if the LLS assumption is not satisfied. As stated in the above Section~\ref{sec:beyond}, BDA~\cite{liu2020generic,liu2021generic} has been proposed to extend the EGBR method to address this issue. Theoretically, the work in~\cite{liu2020generic} actually introduces the LL solution set property and the UL objective convergence property. Theoretical investigations in \cite{liu2021generic} further demonstrate that the iterative gradient-aggregation dynamics can solve the ISB subproblem without the LL singleton assumption and the UL strong convexity. Again, in order to remove the restrictive singleton and convex assumptions on the LL objective, BVFIM~\cite{liu2021value} further proves the same convergence results by introducing the strong constraints on a series of positive decreasing parameters $(\mu,\theta,\tau)$ when $f(\x,\y)$ and $F(\x,\y)$ are level-bounded in $\y$ and locally uniformly in $\x\in\X$.

\subsection{Time and Space Complexity}
\textcolor[rgb]{0.00,0.00,0.00} {
	In this subsection, we analyze the complexity of time and space for these mainstream gradient-based BLO methods (i.e., EGBRs~\cite{franceschi2017forward,shaban2019truncated,liu2020generic}, IGBRs~\cite{pedregosa2016hyperparameter,rajeswaran2019meta,lorraine2020optimizing} and VFBR~\cite{liu2021value}), as summarized in Table~\ref{tab:rad-fad2}. Please notice that here we just follow most BLO literature (e.g.,~\cite{franceschi2017forward,shaban2019truncated,liu2018darts}) to only estimate the complexity of computing the gradient of $\varphi$ w.r.t. $\x$ (defined in Eq.~\eqref{eq:blo-gradient1}) with a fixed (e.g., $T$-step) LL iteration.}

\textbf{EGBR:} 
As discussed in Section~\ref{sec:egbr}, EGBRs generally construct the BR mapping $\y^*(\x)$ or the indirect gradient $\mathbf{G}(\x)$ with the implementation of an unrolled dynamic system (see Eq.~\eqref{eq:full-dynam}). In~\cite{franceschi2017forward}, the dynamic system can be implemented in either a forward automatic differential mode (i.e., FAD) or a reverse automatic differential mode (i.e., RAD). Especially, BDA implements a reverse aggregated gradient flow from the UL and LL subproblems to approximate the BR mapping. 
More specifically, taking into account the fact that the Hessian-matrix product is repeatedly calculated (i.e., $\sum_{t=0}^T\left(\prod_{i=t+1}^T\mathbf{A}_i\right)\mathbf{B}_t$) in the forward propagation, FAD requires the space complexity $O(mn)$ and the time complexity $O(m^2nT)$. RAD in the backward pass needs to evaluate Hessian- and Jacobian-vector products, and stores all the intermediate variables $\{\y_t\in\mathbb{R}^n\}_{t=1}^T$ in memory. So we have that the time and space costs are $O(n(m+n)T)$  and $O(m+nT)$, respectively. By ignoring the long-term dependencies, TRAD uses the truncated back-propagation trajectory with a smaller number of steps (i.e., $M<T$).  As for BDA, with the similar backward propagation manner, we have that the complexity of time and space is the same as that for RAD.

\textbf{IGBR:}
As for IGBRs, we have that they require to derive the indirect gradient based on the implicit function theorem, which results in the overloaded computation with respect to the inverse of Hessian (see Eq.~\eqref{eq:ibr}). To mitigate this problem, IGBRs generally solve a linear system by Conjugate Gradient (CG)~\cite{pedregosa2016hyperparameter,rajeswaran2019meta} or Neumann series~\cite{lorraine2020optimizing}, as stated in Section~\ref{sec:igbr}. 
Without loss of generality, we uniformly assume that these methods perform $J$-step iterations to solve the linear system. Each step contains a hessian-vector product computation requiring the time cost $O(m+n^2J)$. Then with a $T$-step gradient descent on the LL subproblem, we have that the overall time and space complexities can be written as $O(m+nT+n^2J)$ and $O(m+n)$, respectively. It should be noted that the iteration step $J$ generally relies on the properties of Hessian-matrix, thus it should be set much larger than $T$. 

\textbf{VFBR:} \textcolor[rgb]{0.00,0.00,0.00} {It has been stated in Section~\ref{sec:beyond} that VFBR type method (i.e., BVFIM) does not require to solve the unrolled dynamic system or approximate the inverse of Hessian, thus can obtain lower time and space complexity than EGBRs and IGBRs, especially on BLOs with high-dimensional LL subproblems. Specifically, we use $Q_1$ and $Q_2$ to represent the number of gradient iterations for solving the regularized subproblems in  Eqs.~\eqref{eq:BVFIM2} and~\eqref{eq:BVFIM}, respectively. Then it can be checked that the time costs of calculating each gradient descent for the LL and UL value-functions are $O(nQ_1)$ and $O(nQ_2)$, respectively. Moreover, we require additional $O(m)$ time to perform the UL gradient updating. Thus the overall time cost of BVFIM is $O(m+n(Q_1+Q_2))$. As for the space complexity, it is easy to check that BVFIM requires $O(m+n)$ space cost and is the same as that in IGBRs.}

\begin{table}[htb]
	\caption{Comparison of the time and space complexity for several gradient-based mainstream BLOs.}
	\label{tab:rad-fad2}
	\centering 	
	\renewcommand\arraystretch{1.2} 
	\begin{tabular}{|c|c|c|c| }
		\hline
		Category&Method & Time & Space \\ 
		\hline
		\multirow{4}{*}{EGBR}& FAD~\cite{franceschi2017forward} &$O(m^2nT )$  & $O(mn )$ \\
		&RAD~\cite{franceschi2017forward} &$O(n(m+n)T )$  & $O(m+nT )$ \\
		&BDA~\cite{liu2020generic} &$O(n(m+n)T )$  & $O(m+nT )$ \\
		&TRAD~\cite{shaban2019truncated}, &$O(n(m+n)M )$  & $O(m+nM)$\\
		\hline
		\multirow{2}{*}{IGBR} & CG~\cite{pedregosa2016hyperparameter,rajeswaran2019meta} &  $O(m+nT+n^2J)$   & $O(m+n )$  \\
		& Neumann~\cite{lorraine2020optimizing} &  $O(m+nT+n^2J)$   & $O(m+n )$  \\
		\hline
		\multirow{1}{*}{VFBR} & BVFIM~\cite{liu2021value} &  $O(m+n(Q_1+Q_2))$   & $O(m+n )$  \\
		\hline
	\end{tabular}
\end{table}

\textcolor[rgb]{0.00,0.00,0.00} {It can be seen in Table~\ref{tab:rad-fad2} that the reverse propagation methods (i.e., RAD, TRAD and BDA) have benefited from the lightweight matrix-vector multiplication (rather than the overweight Hessian-matrix), thus can obtain less computational complexity in comparison to the forward propagation approach (e.g., FAD). Especially for TRAD, the time and space complexity can be further reduced by the truncated back-propagation strategy. Compared with EGBRs, IGBRs maintain higher computational complexity due to the overloaded computation in terms of the inverse of Hessian. In contrast, VFBR can obtain lower time consuming than both EGBRs and IGBRs. It actually also outperforms EGBRs in costing less memory, especially when solving the LL subproblem on high-dimensional tasks (e.g., with extremely large $n$). }

\section{Potentials for New Algorithms Design} \label{sec:pessimistic} 

As the last but not least part of the survey, this section aims to demonstrate the potentials of our general algorithmic framework for designing new gradient schemes for challenging BLO formulations, such as pessimistic BLOs (stated in Eq.~\eqref{eq:blo-p-1}).

\textcolor[rgb]{0.00,0.00,0.00}
{In fact, pessimistic BLO formulation can be naturally interpreted as a non-cooperative game between two players and has been utilized to formulate problems in the area of mathematical programming~\cite{dempe2014necessary,tsoukalas2009global,malyshev2013global} and other application fields, such as economics~\cite{zheng2016pessimistic,kis2021optimistic} and biology~\cite{zeng2020practical}. However, from the pessimistic viewpoint, the UL player (i.e., leader) cannot anticipate the LL player (i.e., follower)'s decision, the constraint must be satisfied for any rational decision of the follower, thus pessimistic BLO is perceived to be very difficult to solve, especially in high-dimensional application scenarios~\cite{wiesemann2013pessimistic}.}

Now we demonstrate how to develop a practical algorithm within our BR mapping based BLO algorithmic framework for pessimistic BLO formulations\footnote{We emphasize that we just present an example to demonstrate the potentials of our framework for new algorithm design. Strict theoretical analysis and evaluations are definitely out of the scope in this paper and will be considered as the future work.}.  Concretely, based on Eq.~\eqref{eq:blo-p-1} and pessimistic BR mapping (defined in Eq.~\eqref{eq:best-response}), 
we can follow the similar idea in Eq.~\eqref{eq:agg-gra-opt} to aggregate the UL and LL gradients 
\begin{equation*}
\widetilde{\mathbf{d}}(\mathbf{y}_{t-1};\x)=-\rho_t\frac{\partial F(\x,\y_{t-1})}{\partial \y_{t-1}}+(1-\rho_t)\frac{\partial f(\x,\y_{t-1})}{\partial \y_{t-1}}. \label{eq:agg-gra}
\end{equation*}
With the above procedure, it can be seen that the only difference between $\mathbf{d}$ and $\widetilde{\mathbf{d}}$ is just the sign of the UL gradient. Thus we can adopt the same calculation scheme as that in~\cite{liu2020generic,liu2020boml} to solve Eq.~\eqref{eq:blo-p-1}. The corresponding roadmap is also illustrated in Fig.~\ref{fig:blp_fun}.

\section{Conclusions and Future Prospects} \label{sec:conclusions}

Bi-Level Optimization (BLO) is an important mathematical tool for modeling and solving machine learning and computer vision problems that have hierarchical optimization structures, such as hyper-parameter optimization, multi-task and meta learning, neural architecture search, adversarial learning and deep reinforcement learning, etc. In the above sections, we first demonstrated how to formulate different learning and vision tasks from a uniform BLO perspective. We then established a value-function-based single-level reformulation for different categories of BLO models and proposed a best-response-based optimization platform to uniformly understand and formulate a variety of existing gradient-based BLO methods. The convergence behaviors and complexity properties of these BLO algorithms have also been discussed. We also demonstrated potentials of our BLO platform for designing new algorithms to solve the more challenging pessimistic BLOs tasks. 
The future research of BLOs may focus but is not limited to the following aspects:
\begin{itemize}
	\item \textbf{Theoretical breakthrough:}
	The convergence behaviors of gradient-based algorithms on various challenging BLOs, such as pessimistic BLOs~\cite{dempe2014necessary,liu2020methods,Dempe2012IsBP}, BLOs with complex constraints~\cite{ralph1995directional,jane2020constraint}, nonconvex objectives~\cite{borges2020a} and multiple followers~\cite{zhou2018data}, should be investigated.
	\item \textbf{Computational improvement:}
	It is also urgent to design efficient acceleration techniques (e.g., momentum and its variations) to speed up gradient-based BLOs in high-dimensional optimization scenario~\cite{ghadimi2018approximation,shehu2021inertial,huang2021biadam}.
	\item \textbf{Wider applications:} Recent deep learning tasks (e.g., knowledge distillation~\cite{phuong2019towards}, self-supervised learning~\cite{jing2020self}, and transformer~\cite{parmar2018image}) are more and more sophisticated. BLOs should be a promising tool to formulate and analyze these complex learning paradigms. 
\end{itemize}

\ifCLASSOPTIONcompsoc
\section*{Acknowledgments}
\else
\section*{Acknowledgment}
\fi

This work is partially supported by the National Key R\&D Program of China (2020YFB1313503), the National Natural Science Foundation of China (Nos. 61922019, 61733002, and 61672125), LiaoNing Revitalization Talents Program (XLYC1807088), Shenzhen Science and Technology Program (No. RCYX20200714114700072), and the Fundamental Research Funds for the Central Universities.

\ifCLASSOPTIONcaptionsoff
\fi

\bibliographystyle{IEEEtran}
\bibliography{reference}

\begin{IEEEbiography}[{\includegraphics[width=1in,height=1.25in,clip,keepaspectratio]{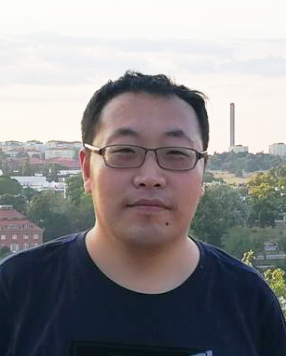}}]{Risheng Liu} received his B.Sc. (2007) and Ph.D. (2012) from Dalian University of Technology, China. From 2010 to 2012, he was doing research as joint Ph.D. in robotics institute at Carnegie Mellon University. From 2016 to 2018, He was doing research as Hong Kong Scholar at the Hong Kong Polytechnic University. He is currently a full professor with the Digital Media Department at International School of Information Science \& Engineering, Dalian University of Technology. He was awarded the ``Outstanding Youth Science Foundation" of the National Natural Science Foundation of China. His research interests include optimization, computer vision and multimedia.
\end{IEEEbiography}

\begin{IEEEbiography}[{\includegraphics[width=1in,height=1.25in,clip,keepaspectratio]{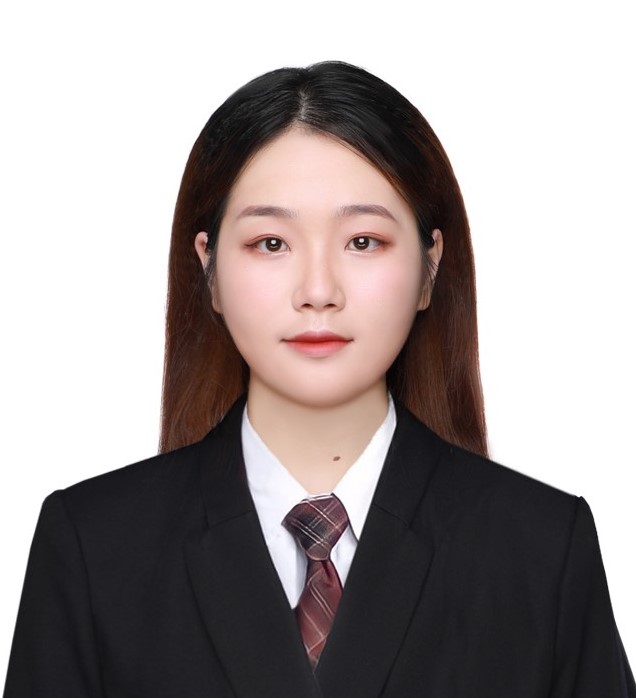}}]{Jiaxin Gao} received the B.S. degree in Applied Mathematics from Dalian University of Technology, China, in 2018. She is currently pursuing the PhD degree in software engineering at Dalian University of Technology, Dalian, China. She is with the Key Laboratory for Ubiquitous Network and Service Software of Liaoning Province, Dalian University of Technology, Dalian, China. Her research interests include computer vision, machine learning and optimization. 
\end{IEEEbiography}

\begin{IEEEbiography}[{\includegraphics[width=1in,height=1.25in,clip,keepaspectratio]{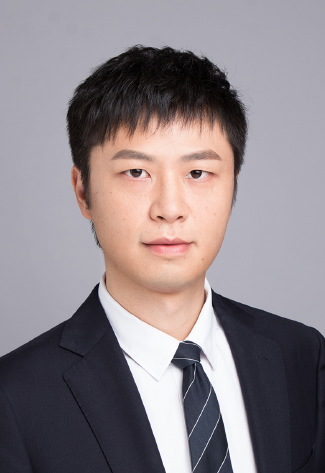}}]{Jin Zhang} received the B.A. degree in Journalism from the Dalian University of Technology in 2007. He pursued a degree in mathematics and received the M.S. degree in Operational Research and Cybernetics from the Dalian University of Technology, China, in 2010, and the PhD degree in Applied Mathematics from University of Victoria, Canada, in 2015. After working in Hong Kong Baptist University for 3 years, he joined Southern University of Science and Technology as a tenure-track assistant professor in the Department of Mathematics. His broad research area is comprised of optimization, variational analysis and their applications in economics, engineering and data science.
\end{IEEEbiography}

\begin{IEEEbiography}[{\includegraphics[width=1in,height=1.25in,clip,keepaspectratio]{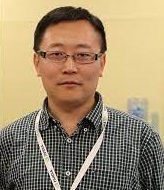}}]{Deyu Meng} (Member, IEEE) received the B.Sc. degree in information science, the M.Sc. degree in applied mathematics, and the Ph.D. degree in computer science from Xi’an Jiaotong University, Xi’an, China,  in 2001, 2004, and 2008, respectively.
He is currently a Professor with the School of Mathematics and Statistics, Xi’an Jiaotong University, and  adjunct Professor with the Faculty of Information
Technology, The Macau University of Science and Technology, Macao. From 2012 to 2014, he took his  two-year sabbatical leave at Carnegie Mellon University, Pittsburgh, PA, USA. His current research interests include model-based  deep learning, variational networks, and meta learning.
\end{IEEEbiography}

\begin{IEEEbiography}[{\includegraphics[width=1in,height=1.25in,clip,keepaspectratio]{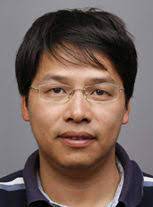}}]{Zhouchen Lin} (M’00-SM’08-F’18) received the PhD degree from Peking University in 2000. He  is currently a professor with the Key Laboratory  of Machine Perception, School of EECS, Peking University. His research interests include computer  vision, image processing, machine learning, pattern  recognition, and numerical optimization. He has  been an area chair of CVPR, ICCV, NIPS/NeurIPS, AAAI, IJCAI, ICLR and ICML many times, and  is a Program Co-Chair of ICPR 2022. He was an  associate editor of the IEEE Transactions on Pattern Analysis and Machine Intelligence and currently is an associate editor of the International Journal of Computer Vision. He is a Fellow of IAPR and IEEE.
\end{IEEEbiography}
%
%
%
%
%




\end{document}